\documentclass[sigconf,authorversion]{acmart}
\usepackage{xcolor}
\usepackage[utf8]{inputenc}
\usepackage{microtype}
\usepackage{algorithm}
\usepackage{subcaption}
\usepackage{amsthm}
\usepackage{amsmath}
\usepackage[capitalize]{cleveref}
\usepackage{bbold}
\usepackage{nicefrac}

\newtheorem{proposition}{Proposition}

\newcommand{\newterm}{\emph}
\newcommand{\dataset}{\mathcal{D}}

\newcommand{\sR}{\mathbb{R}}
\newcommand{\domain}{\mathcal{X}}
\newcommand{\classes}{\mathcal{Y}}
\newcommand{\train}{T}
\newcommand{\rashomon}{P_{\train(\dataset)}}
\newcommand{\predmult}{\mu}

\newcommand{\define}{~\triangleq~}
\newcommand{\id}{\mathbb{1}}

\newcommand{\E}{\mathbb{E}}

\newcommand{\hsiang}[1]{}
\newcommand{\bogdan}[1]{}
\newcommand{\flavio}[1]{}
\newcommand{\carmela}[1]{}

\hyphenation{multi-pli-city}

\title[Arbitrary Decisions are a Hidden Cost of Differentially Private Training]{Arbitrary Decisions are a Hidden Cost of \\Differentially Private Training}

\author{Bogdan Kulynych}
\affiliation{%
  \institution{EPFL SPRING Lab}
  \city{Lausanne}
  \country{Switzerland}
}

\author{Hsiang Hsu}
\affiliation{%
  \institution{Harvard University}
  \city{Cambridge}
  \state{MA}
  \country{USA}
}

\author{Carmela Troncoso}
\affiliation{%
  \institution{EPFL SPRING Lab}
  \city{Lausanne}
  \country{Switzerland}
}

\author{Flavio du Pin Calmon}
\affiliation{%
  \institution{Harvard University}
  \city{Cambridge}
  \state{MA}
  \country{USA}
}

\begin{abstract}
Mechanisms used in privacy-preserving machine learning often aim to guarantee differential privacy (DP) during model training. Practical DP-ensuring training methods use randomization when fitting model parameters to privacy-sensitive data (e.g., adding Gaussian noise to clipped gradients). We demonstrate that such randomization incurs predictive multiplicity: for a given input example, the output predicted by equally-private models depends on the randomness used in training. Thus, for a given input, the predicted output can vary drastically if a model is re-trained, even if the same training dataset is used. The predictive-multiplicity cost of DP training has not been studied, and is currently neither audited for nor communicated to model designers and stakeholders. We derive a bound on the number of re-trainings required to estimate predictive multiplicity reliably. We analyze—both theoretically and through extensive experiments—the predictive-multiplicity cost of three DP-ensuring algorithms: output perturbation, objective perturbation, and DP-SGD. We demonstrate that the degree of predictive multiplicity rises as the level of privacy increases, and is unevenly distributed across individuals and demographic groups in the data. Because randomness used to ensure DP during training explains predictions for some examples, our results highlight a fundamental challenge to the justifiability of decisions supported by differentially private models in high-stakes settings. We conclude that practitioners should audit the predictive multiplicity of their DP-ensuring algorithms before deploying them in applications of individual-level consequence.
\end{abstract}

\begin{document}

\begin{CCSXML}
<ccs2012>
   <concept>
       <concept_id>10010147.10010257</concept_id>
       <concept_desc>Computing methodologies~Machine learning</concept_desc>
       <concept_significance>500</concept_significance>
       </concept>
   <concept>
       <concept_id>10010147.10010257.10010293.10010294</concept_id>
       <concept_desc>Computing methodologies~Neural networks</concept_desc>
       <concept_significance>300</concept_significance>
       </concept>
   <concept>
       <concept_id>10010147.10010341.10010342.10010344</concept_id>
       <concept_desc>Computing methodologies~Model verification and validation</concept_desc>
       <concept_significance>500</concept_significance>
       </concept>
   <concept>
       <concept_id>10002978.10003029.10011150</concept_id>
       <concept_desc>Security and privacy~Privacy protections</concept_desc>
       <concept_significance>500</concept_significance>
       </concept>
   <concept>
       <concept_id>10002978.10003029.10003032</concept_id>
       <concept_desc>Security and privacy~Social aspects of security and privacy</concept_desc>
       <concept_significance>500</concept_significance>
       </concept>
 </ccs2012>
\end{CCSXML}

\ccsdesc[500]{Security and privacy~Privacy protections}
\ccsdesc[500]{Security and privacy~Social aspects of security and privacy}
\ccsdesc[500]{Computing methodologies~Model verification and validation}
\ccsdesc[500]{Computing methodologies~Machine learning}
\ccsdesc[300]{Computing methodologies~Neural networks}

\copyrightyear{2023}
\acmYear{2023}
\setcopyright{acmlicensed}\acmConference[FAccT '23]{2023 ACM Conference on Fairness, Accountability, and Transparency}{June 12--15, 2023}{Chicago, IL, USA}
\acmBooktitle{2023 ACM Conference on Fairness, Accountability, and Transparency (FAccT '23), June 12--15, 2023, Chicago, IL, USA}
\acmPrice{15.00}
\acmDOI{10.1145/3593013.3594103}
\acmISBN{979-8-4007-0192-4/23/06}

\maketitle

\section{Introduction}

\begin{figure*}
    \includegraphics[width=.9\linewidth]{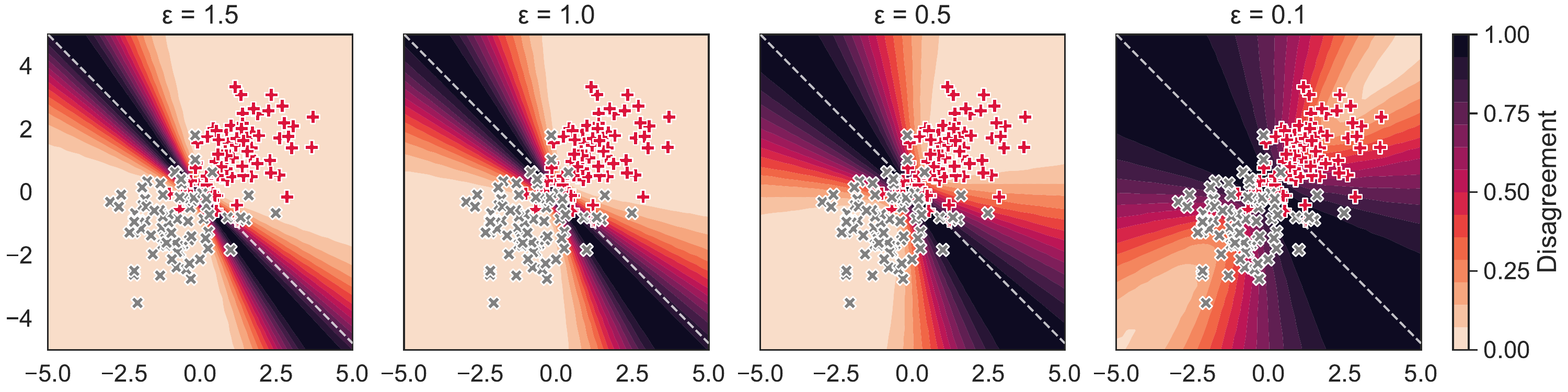}
    \caption{\textbf{The region of examples which exhibit high variance of decisions (dark) across similar models grows as the privacy level increases (lower $\varepsilon$).} Each plot shows the level of decision disagreement across $m = 5{\small,}000$ logistic-regression models (darker means higher disagreement) trained with varying levels of differential privacy ($\varepsilon$ value, lower means more private) using the objective-perturbation method~\cite{chaudhuri2011differentially}. All models attain at least 72\% accuracy on the test dataset (50\% is the baseline). The disagreement value of 1.0 means that out of the $m$ models, half output the positive decision, whereas the other half output the negative one for a given example. The values of disagreement are shown for different possible two-dimensional examples, with x and y axes corresponding to the two dimensions. The markers show training data examples belonging to two classes (denoted as $\textcolor{gray}{\bf \times}$ and $\textcolor{purple}{\bf +}$, respectively). Without DP, there is a single optimal classification model. The dotted line~\textcolor{gray}{-\,-} shows the decision boundary of this optimal non-private model. See \cref{sec:exp} for details.}
    \label{fig:syn-demo}
\end{figure*}

In many high-stakes prediction tasks (e.g., lending, healthcare), training data used to fit parameters of machine-learning models are privacy-sensitive. A standard technical approach to ensure privacy is to use training procedures that satisfy \newterm{differential privacy} (DP)~\cite{dwork2006calibrating, dwork2014algorithmic}. DP is a formal condition that, intuitively, guarantees a degree of plausible deniability on the inclusion of an individual sample in the training data. In order to satisfy this condition, non-trivial differentially-private training procedures use randomization (see, e.g., \citet{chaudhuri2011differentially,abadi2016deep}). The noisy nature of DP mechanisms is key to guarantee plausible deniability of a record's inclusion in the training data.
Unfortunately, randomization comes at a cost: it often leads to decreased accuracy compared to non-private training~\cite{jayaraman2019evaluating}. Reduced accuracy, however, is not the only cost incurred by differentially-private training. DP mechanisms can also increase \newterm{predictive multiplicity}, discussed next.

In a prediction task, there can exist multiple models that achieve comparable levels of accuracy yet output drastically different predictions for the same input. This phenomenon is known as predictive multiplicity~\cite{marx2020predictive}, and has been documented in multiple realistic machine-learning settings~\cite{marx2020predictive, hsu2022rashomon, watson2022predictive}. Predictive multiplicity can appear due to under-specification and randomness in the model's training procedure~\cite{d2020underspecification,black2021selective}.

Predictive multiplicity formalizes the \newterm{arbitrariness} of decisions based on a model's output. In practice, predictive multiplicity can lead to questions such as ``\emph{Why has a model issued a negative decision on an individual's loan application if other models with indistinguishable accuracy would have issued a positive decision?}'' or ``\emph{Why has a model suggested a high dose of a medicine for an individual if other models with comparable average accuracy would have prescribed a lower dose?}'' These examples highlight that acting on predictions of a single model without regard for predictive multiplicity can result in arbitrary decisions. Models produced by training algorithms that exhibit high predictive multiplicity face fundamental challenges to their credibility and justifiability in high-stakes settings ~\cite{d2020underspecification,black2022model}. 

In this paper, we demonstrate a fundamental connection between privacy and predictive multiplicity: For a fixed training dataset and model class,  DP training results in models that ensure the same degree of privacy and achieve comparable accuracy, yet assign conflicting outputs to individual inputs. DP training produces conflicting models even when non-private training results in a single optimal model. Thus, in addition to decreased accuracy, DP-ensuring training methods also incur an arbitrariness cost by exacerbating predictive multiplicity. We show that the degree of predictive multiplicity varies significantly across individuals and can disproportionately impact certain population groups. \cref{fig:syn-demo} illustrates the predictive-multiplicity cost of DP training in a simple synthetic scenario (see \cref{sec:exp} for examples on real-world datasets).

Our main contributions are:
\begin{enumerate}
    \item We provide the first analysis of the predictive-multiplicity cost of differentially-private training.
    \item We analyze a method for estimating the predictive-multi\-pli\-city properties of randomized machine-learning algorithms using re-training. We derive the first bound on the sample complexity of estimating predictive multiplicity with this approach. Our bound enables practitioners to determine the number of re-trainings required to estimate the predictive-multiplicity cost of randomized training algorithms up to a desired level of accuracy.
    \item We conduct a theoretical analysis of the predictive-multipli\-city cost of the output perturbation mechanism~\cite{chaudhuri2011differentially} used to obtain { a differentially-private logistic-regression model}. We characterize the exact dependence of predictive multiplicity on the level of privacy for this method.
    \item We conduct an empirical study of predictive multiplicity of two practical DP-ensuring learning algorithms: DP-SGD~\cite{abadi2016deep} and objective perturbation~\cite{chaudhuri2011differentially}. We use one synthetic dataset and five real-world datasets in the domains of finance, healthcare, and image classification. Our results confirm that, for these mechanisms, increasing the level of privacy invariably increases the level of predictive multiplicity. Moreover, we find that different examples exhibit different levels of predictive multiplicity. In particular, different demographic groups can have different average levels of predictive multiplicity.
\end{enumerate}

In summary, the level of privacy in DP training significantly impacts the level of predictive multiplicity. This, in turn, means that decisions supported by differentially-private models can have an increased level of arbitrariness: a given decision would have been different had we used a different random seed in training, even when all other aspects of training are kept fixed and the optimal non-private model is unique. Before deploying DP-ensuring models in high-stakes situations, we suggest that practitioners quantify predictive multiplicity of these models over salient populations and---if possible to do so without violating privacy---measure predictive multiplicity of individual decisions during model operation. Such audits can help practitioners evaluate whether the increase in privacy threatens the justifiability of decisions, choose whether to enact a decision based on a model's output, and determine whether to deploy a model in the first place.

\section{Technical Background}
\label{sec:background}

\subsection{Problem Setup and Notation} 
We consider a classification task on a \newterm{training dataset}, denoted as $\dataset \define \{(x_i, y_i)\}_{i=1}^n$, and consisting of examples $x_i \in \domain$ along with their respective labels $y_i \in \classes$.
In this work, we focus on the setting of binary classification, $\classes = \{0, 1\}$.
The goal of a classification task is to use the dataset to train a classifier $f_\theta: \domain\to\classes$, which accurately predicts labels for input examples in a given \newterm{test dataset} $\dataset_\mathrm{test} \in 2^{\domain \times \classes}$, where $2^{\domain \times \classes}$ denotes the power set over $\domain \times \classes$. 
Each classifier $f_\theta(x)$ is parameterized by a vector $\theta \in \Theta \subseteq \sR^d$.
{
A classifier associates a \newterm{confidence score} to each predicted input $x$, denoted as $h_\theta(x) \in [0, 1]$. 
If the confidence score is higher than some threshold $q \in [0, 1]$, then the decision is positive. Otherwise, it is negative. The classifier's prediction is thus obtained by applying a threshold to the confidence score: 
\begin{equation}\label{eq:threshold}
    f_\theta(x) \define \id[h_\theta(x) > q].
\end{equation}
In the rest of the paper, we use the standard threshold of $q = 0.5$.
}

We study randomized training algorithms $\train: (\domain\times\classes)^n \to\Theta$, which produce a parameter vector of a classifier in a randomized way. Thus, given a training dataset, $\train(\dataset)$ is a random variable. We denote by $\rashomon$ the \newterm{model distribution}, the probability distribution over $\Theta$ generated by the random variable $\train(\dataset)$.

In general, the source of randomness in the training procedure could include, e.g., random initializations of $\theta$ prior to training. However, we consider only those sources which are introduced by the privacy-preserving techniques, as we explain in the next section. Throughout this paper, the datasets, as well as any input example $x \in \domain$, are not random variables but fixed values. The only randomness we consider in our notation is due to the internal randomization of the training procedure $\train(\cdot)$. %
Finally, $I_d$ denotes the $d$-by-$d$ identity matrix, and $\id(\cdot)$ denotes the indicator function.

\subsection{Differentially Private Learning}
\label{sec:dp-bg}
Learning with differential privacy (DP) is one of the standard approaches to train models on privacy-sensitive data~\cite{dwork2006calibrating, dwork2014algorithmic}. 
A randomized learning algorithm $\train(\cdot)$ is $(\varepsilon, \delta)$-differentially private (DP) if for any two \newterm{neighbouring datasets} (i.e., datasets differing by at most one example) $\dataset, \dataset' \in (\domain \times \classes)^n$, for any subset of parameter vectors $A \subseteq \Theta$, it holds that
\begin{equation}
    \Pr[ \train(\dataset) \in A ] \leq \exp(\varepsilon) \Pr[ \train(\dataset') \in A] + \delta.
\end{equation}
In other words, the respective probability distributions of models produced on any two neighbouring datasets should be similar to a degree defined by parameters $(\varepsilon, \delta)$. The parameters represent the level of privacy: low $\varepsilon$ and low $\delta$ mean better privacy. DP mathematically encodes a notion of plausible deniability of the inclusion of an example in the dataset. 

There are multiple ways to ensure DP in machine learning. We describe next the \newterm{output perturbation mechanism}, which we theoretically analyze in \cref{sec:out-pert}.

\paragraph{Output perturbation mechanism \cite{rubinstein2012learning, chaudhuri2011differentially, wu2017bolt}.} Output perturbation is a simple mechanism for achieving DP that takes an output parameter vector of a non-private training procedure, and privatizes it by adding random noise, e.g., sampled from the isotropic Gaussian distribution. Concretely, suppose that $\train_\mathsf{np}: (\domain \times \classes)^n \rightarrow \Theta$ is a non-private learning algorithm. Denoting its output parameters as $\theta_\mathsf{np} = \train_\mathsf{np}(\dataset)$, we obtain the privatized parameters $\theta_\mathsf{priv} \in \Theta$ as:
\begin{equation}\label{eq:out-pert}
    \theta_\mathsf{priv} = \theta_\mathsf{np} + \xi, \text{ where } \xi \sim \mathcal{N}(0, \sigma^2 I_d)
\end{equation}
The exact level of DP provided by this procedure depends on the choice the non-private training algorithm $\train_\mathsf{np}(\dataset)$. In particular, to achieve $(\varepsilon, \delta)$-DP, it is sufficient to set the noise scale $\sigma = C \cdot \nicefrac{\sqrt{2 \log(1.25 / \delta)}}{\varepsilon}$,
where $C \define \max_{\dataset \sim \dataset'} \| \train_\mathrm{np}(\dataset) - \train_\mathrm{np}(\dataset') \|_2$ is the \newterm{sensitivity} of the non-private training algorithm, the maximum discrepancy in terms of the $\ell_2$ distance between parameter vectors obtained by training on any two neighbouring datasets $\dataset, \dataset'$.

Denoting by $\train(\dataset) = \train_\mathrm{np}(\dataset) + \xi$ the output-perturbation procedure in \cref{eq:out-pert}, we treat $\train(\dataset)$ as a random variable over the randomness of the injected noise $\xi$. Other methods to achieve DP such as objective perturbation~\cite{chaudhuri2011differentially} also inject noise as part of training. In those cases, we similarly consider $\train(\dataset)$ as a random variable over such injected noise, and treat all other aspects of training such as pre-training initialization as fixed.

\subsection{Predictive Multiplicity}

Predictive multiplicity occurs when multiple classification models achieve comparable average accuracy yet produce conflicting predictions on a given example \cite{marx2020predictive}. To quantify predictive multiplicity in randomized training, we need to measure dissimilarity of predictions among the models sampled from the probability distribution $\rashomon$ induced by differentially-private training. For this, we use a definition of \newterm{disagreement} which has appeared in different forms in \cite{black2022model,d2020underspecification,marx2020predictive}. For a given fixed input example $x \in \domain$, we define the disagreement $\predmult(x)$ as:
\begin{equation}\label{def:disagreement}
    \predmult(x) \define 2 \Pr_{\theta, \theta' \sim \rashomon}[f_{\theta}(x) \neq f_{\theta'}(x)].
\end{equation}
In the above definition, $\theta, \theta' \sim \rashomon$ denotes two models sampled independently from $\rashomon$. We use a scaling factor of two in order to ensure that $\predmult(x)$ is in the $[0, 1]$ range for the ease of interpretation. A disagreement value $\mu(x)\approx 1$ indicates that the prediction for $x$ is approximately equal to an unbiased coin flip. Moreover, a disagreement $\mu(x)\approx 0$ implies that, with high probability, the prediction for $x$ does not significantly change if two models are independently sampled from $\rashomon$ (i.e., by re-training a model twice with different random seeds).

{ In the literature, a commonly studied source of variance of outcomes of training algorithms is from re-sampling of the dataset $\dataset$, usually under the assumption that it is an i.i.d. sample from some data distribution. We do not study variance arising from dataset re-sampling, and are only interested in the predictive-multiplicity properties of the randomized training procedure $\train(\cdot)$ itself.
Thus, we \emph{fix} both the dataset $\dataset$ used in training and the input example $x$ for which we compute the level of predictive multiplicity, and make sure that the randomness is only due to internal randomization of the training procedure $\train(\cdot)$. }

When evaluating dissimilarity across models, many prior works that study predictive multiplicity (e.g.,~\cite{marx2020predictive, semenova2022existence, hsu2022rashomon, watson2022predictive}) only consider models that surpass a certain accuracy threshold. 
Although conditioning on model accuracy is theoretically valid, it can bring about confusion in the context of private learning, as in practice such conditioning would demand special mechanisms in order to satisfy DP (see, e.g., \cite{papernot2021hyperparameter}). In particular, first applying a DP training method that guarantees an $(\varepsilon,\delta)$-level of privacy, and then selecting or discarding the resulting model based on accuracy, would result in models that violate the initial $(\varepsilon,\delta)$-DP guarantees.  
We note, however, that our results and experiments involving estimation of predictive multiplicity in \cref{sec:estimator,sec:exp} extend to the case in which we add  additional conditioning on top of  model distribution $\rashomon$ to control for accuracy.

Before proceeding with our analyses of disagreement, we first state a simple yet useful relation between disagreement and statistical variance. Observe that for a given input $x$, the output prediction $f_\theta(x)$ is a random variable over the randomness of the training procedure $\theta \sim \rashomon$. As we assume that the decisions are binary, and training runs are independent, we have that $f_\theta(x) \sim \mathrm{Bernoulli}(p_x)$ for some input-specific parameter $p_x$.
Having noted this fact, we show that disagreement, defined in \cref{def:disagreement}, can be expressed as a continuous transformation of $p_x$:
\begin{proposition}{multvar}\label{stmt:mult-var}
For binary classifiers, disagreement for a given example $x \in \domain$ is proportional to variance of decisions over the distribution of models generated by the training algorithm:
\begin{equation}
    \predmult(x) = 4 \, \mathrm{Var}_{\theta \sim \rashomon}(f_\theta(x)) = 4p_x \, (1-p_x).
\end{equation}
\end{proposition}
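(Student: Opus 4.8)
The plan is to compute the disagreement probability directly from the Bernoulli structure of the prediction random variable that has already been established in the text. Since training runs are independent and decisions are binary, for a fixed input $x$ we have $f_\theta(x) \sim \mathrm{Bernoulli}(p_x)$, and a second independent draw $f_{\theta'}(x) \sim \mathrm{Bernoulli}(p_x)$ with $\theta' $ independent of $\theta$. The event $\{f_\theta(x) \neq f_{\theta'}(x)\}$ decomposes into the two disjoint events $\{f_\theta(x) = 1, f_{\theta'}(x) = 0\}$ and $\{f_\theta(x) = 0, f_{\theta'}(x) = 1\}$, so by independence $\Pr[f_\theta(x) \neq f_{\theta'}(x)] = p_x(1-p_x) + (1-p_x)p_x = 2 p_x(1-p_x)$.

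Plugging this into the definition in \cref{def:disagreement} gives $\predmult(x) = 2 \cdot 2 p_x(1-p_x) = 4 p_x(1-p_x)$, which establishes the second equality. For the first equality, I would recall the elementary fact that a $\mathrm{Bernoulli}(p_x)$ random variable has variance $p_x(1-p_x)$; hence $4 \,\mathrm{Var}_{\theta \sim \rashomon}(f_\theta(x)) = 4 p_x(1-p_x) = \predmult(x)$, chaining the three quantities together.

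There is essentially no substantive obstacle here: the statement is a one-line consequence of the independence of the two re-training runs and the closed form of the Bernoulli variance. The only points worth stating carefully are (i) that the two draws $\theta, \theta'$ are genuinely independent (so the joint probability factorizes), which is exactly the assumption built into \cref{def:disagreement}, and (ii) that $p_x$ is well-defined, i.e. that $f_\theta(x)$ is a measurable $\{0,1\}$-valued function of $\theta$, which follows from \cref{eq:threshold}. Once these are noted, the computation above is complete.
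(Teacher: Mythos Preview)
Your proof is correct and essentially the same as the paper's: both exploit the independence of the two draws and the Bernoulli structure of $f_\theta(x)$ to reduce $\predmult(x)$ to $4p_x(1-p_x)$ and identify this with four times the Bernoulli variance. The only cosmetic difference is that the paper first rewrites $\id[f_\theta(x)\neq f_{\theta'}(x)]$ as $(f_\theta(x)-f_{\theta'}(x))^2$ and expands to reach the variance, whereas you enumerate the two disjoint disagreement events directly; the content is identical.
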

We provide the proof of this and all the following formal statements in \cref{app:proofs}. { Additionally, in \cref{sec:exp-details}, we provide an analysis using an alternative measure of predictive multiplicity.}

\section{Predictive Multiplicity of Output Perturbation}
\label{sec:out-pert}

\begin{figure*}
    \centering
    \includegraphics[width=.35\linewidth]{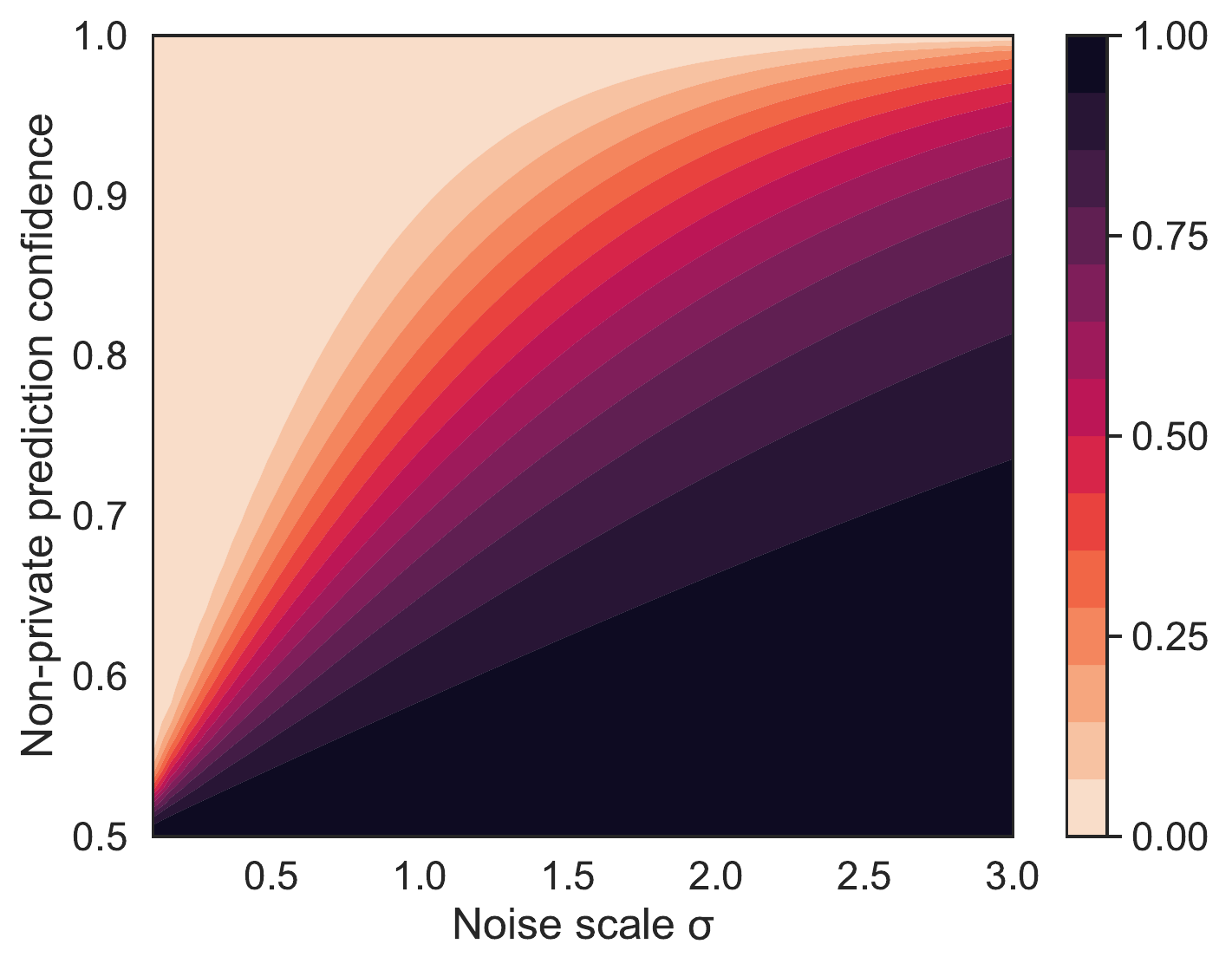}~
    \includegraphics[width=.33\linewidth]{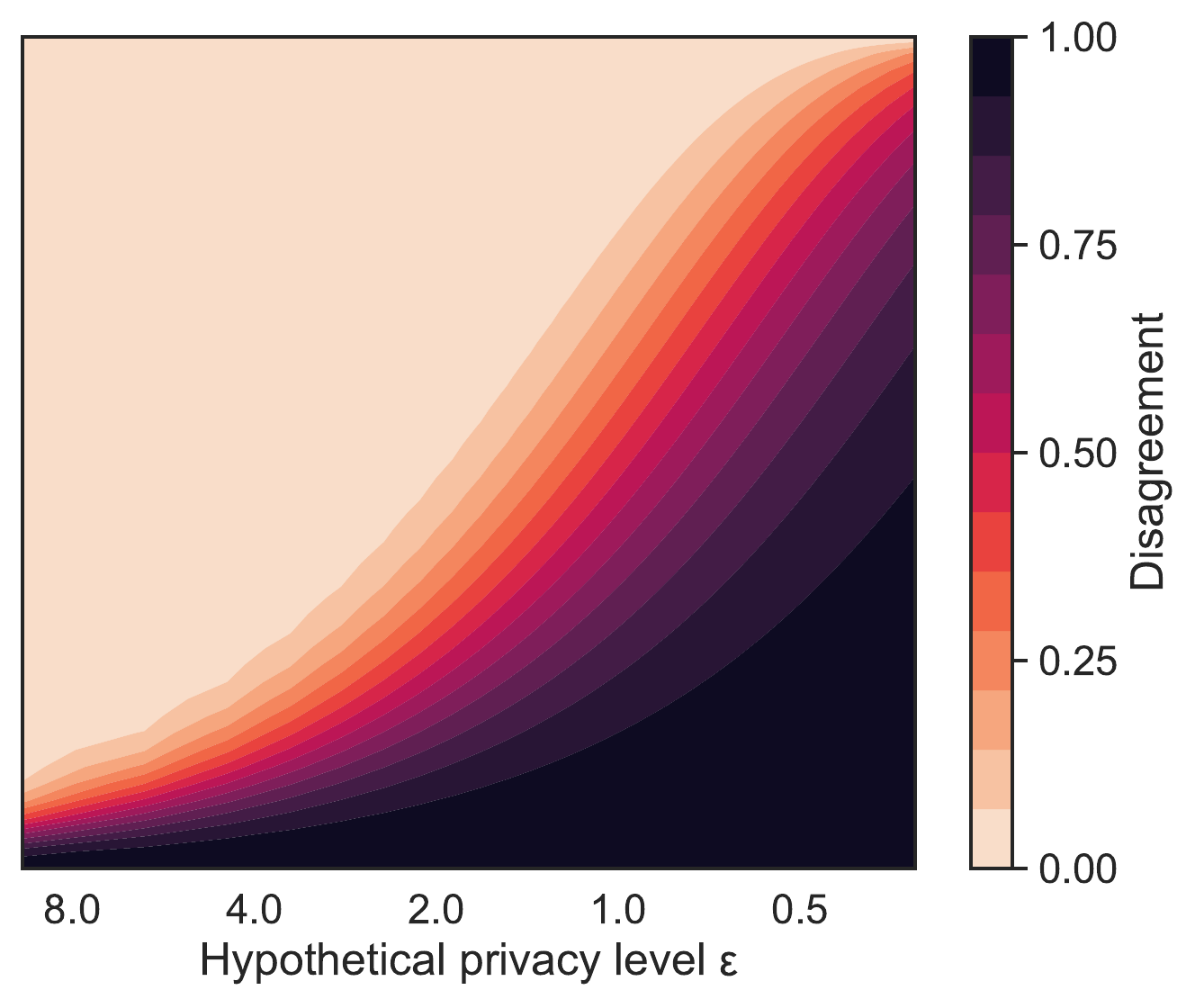}
    \caption{\textbf{The noise scale in output perturbation mechanisms increases predictive multiplicity for examples which do not attain high non-private prediction confidence.} On the left, the x axis shows the noise scale used for output perturbation (higher values of $\sigma$ correspond to better privacy). The noise scale corresponds to different levels of privacy depending on the sensitivity of the non-private training algorithm and the $\delta$ parameter (see \cref{sec:dp-bg}). On the right, the x axis (logarithmic scale) shows a possible level of privacy $\varepsilon$ for $\delta = 10^{-5}$, assuming that the non-private training algorithm has sensitivity of $C = 0.2$. The y axis shows the hypothetical prediction confidence for a given example. The color intensity shows the level of disagreement (darker means higher disagreement).}
    \label{fig:out-pert}
\end{figure*}

To demonstrate how DP training can lead to an increase in predictive multiplicity, we theoretically analyze the multiplicity properties of the output-perturbation mechanism described in \cref{sec:dp-bg}.

Following \citet{chaudhuri2011differentially} and~\citet{wu2017bolt}, we study the case of logistic regression. { In a logistic-regression model parameterized by vector $\theta \in \sR^d$, we compute the confidence score for an input $x \in \domain \subseteq \sR^d$ as $h_\theta(x) = \mathsf{sigmoid}(\theta^\intercal x)$, where
\begin{equation}
    \mathsf{sigmoid}(t) \define \frac{1}{1 + \exp(-t)}.
\end{equation}
Recall that the classifier's prediction is obtained by applying a threshold to the confidence score by \cref{eq:threshold}, in this case as $f_\theta(x) = \id[\mathsf{sigmoid}(\theta^\intercal x) > 0.5]$.} Note that the quantity $\theta^\intercal x$ is interchangeable with confidence, as one can be obtained from the other using an invertible transformation. We show the exact relationship between disagreement and the scale of noise $\sigma$ in this setting:

\begin{proposition}
\label{prop:outpert}
Let $\theta_\mathsf{np} = \train_\mathsf{np}(\dataset)$ be a non-private parameter vector of a logistic-regression model. Suppose that the privatized $\theta_\mathsf{priv}$ is obtained using Gaussian noise of scale $\sigma$ as in \cref{eq:out-pert}. Then, the disagreement of a private logistic-regression model parameterized by $\theta_\mathsf{priv}$ is:
\begin{equation}\label{eq:mult-out-pert}
    \predmult(x) = 4 \, p_x (1 - p_x), \text{where $p_x = \Phi\left(\frac{\theta_\mathrm{np}^\intercal \, x}{\|x\| \cdot \sigma}\right)$.}
\end{equation}
\end{proposition}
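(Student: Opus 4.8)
The plan is to reduce the claim to \Cref{stmt:mult-var}, which already tells us that $\predmult(x) = 4 p_x(1-p_x)$ where $p_x \define \Pr_{\theta_\mathsf{priv}}[f_{\theta_\mathsf{priv}}(x) = 1]$. So the entire task is to compute this Bernoulli parameter $p_x$ for the output-perturbation mechanism applied to logistic regression, and show it equals $\Phi\!\left(\nicefrac{\theta_\mathsf{np}^\intercal x}{\|x\|\sigma}\right)$.

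First I would unwind the definition of the prediction. Since $f_{\theta_\mathsf{priv}}(x) = \id[\mathsf{sigmoid}(\theta_\mathsf{priv}^\intercal x) > 0.5]$ and $\mathsf{sigmoid}(t) > 0.5 \iff t > 0$, we have $f_{\theta_\mathsf{priv}}(x) = \id[\theta_\mathsf{priv}^\intercal x > 0]$. Hence $p_x = \Pr[\theta_\mathsf{priv}^\intercal x > 0]$. Next I would substitute $\theta_\mathsf{priv} = \theta_\mathsf{np} + \xi$ with $\xi \sim \mathcal{N}(0, \sigma^2 I_d)$, so that $\theta_\mathsf{priv}^\intercal x = \theta_\mathsf{np}^\intercal x + \xi^\intercal x$. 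The key linear-algebra fact is that $\xi^\intercal x$ is a one-dimensional Gaussian: since $\xi$ is isotropic Gaussian, $\xi^\intercal x \sim \mathcal{N}(0, \sigma^2 \|x\|^2)$. Therefore $\theta_\mathsf{priv}^\intercal x \sim \mathcal{N}(\theta_\mathsf{np}^\intercal x, \sigma^2\|x\|^2)$.

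From here it is a standard Gaussian tail computation: $p_x = \Pr[Z > 0]$ for $Z \sim \mathcal{N}(\theta_\mathsf{np}^\intercal x, \sigma^2\|x\|^2)$, which by standardizing equals $\Pr\!\left[\mathcal{N}(0,1) > \nicefrac{-\theta_\mathsf{np}^\intercal x}{\sigma\|x\|}\right] = 1 - \Phi\!\left(\nicefrac{-\theta_\mathsf{np}^\intercal x}{\sigma\|x\|}\right) = \Phi\!\left(\nicefrac{\theta_\mathsf{np}^\intercal x}{\sigma\|x\|}\right)$, using the symmetry $1 - \Phi(-t) = \Phi(t)$. Plugging this $p_x$ into \Cref{stmt:mult-var} gives exactly \cref{eq:mult-out-pert}.

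There is no real obstacle here; the only things to be careful about are (i) confirming that $f_\theta(x)$ really does reduce to the sign of $\theta^\intercal x$ under the threshold $q = 0.5$, and (ii) the mild edge case $\theta^\intercal x = 0$ or $x = 0$, where the probability statement is still well-defined (the event $\{\theta_\mathsf{priv}^\intercal x > 0\}$ has probability $\nicefrac{1}{2}$ when $x \neq 0$ and $\theta_\mathsf{np}^\intercal x = 0$, consistent with $\Phi(0) = \nicefrac12$; if $x = 0$ the formula is vacuous and can be excluded). Everything else is the one-line invocation of \Cref{stmt:mult-var} plus the rotational invariance of the isotropic Gaussian.
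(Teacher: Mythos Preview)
Your proposal is correct and follows essentially the same route as the paper's proof: reduce via \cref{stmt:mult-var} to computing $p_x=\Pr[\theta_\mathsf{priv}^\intercal x>0]$, use that $\mathsf{sigmoid}(t)>0.5 \iff t>0$, and then observe that $\theta_\mathsf{priv}^\intercal x = \theta_\mathsf{np}^\intercal x + \xi^\intercal x$ is a one-dimensional Gaussian with mean $\theta_\mathsf{np}^\intercal x$ and variance $\sigma^2\|x\|^2$, so standardizing gives $p_x=\Phi(\theta_\mathsf{np}^\intercal x/(\|x\|\sigma))$. The paper omits the edge-case discussion you added, but otherwise the arguments are identical.
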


We visualize the relationship in \cref{fig:out-pert}, assuming the input space is normalized so that $\|x\| = 1$. There are two main takeaways from this result. First, disagreement is high when the level of privacy is high. Second,  the level of multiplicity is unevenly distributed across input examples. This is because the exact relationship between multiplicity and privacy also depends on the confidence of the non-private model, $\theta_\mathsf{np}^\intercal \, x$, with lower-confidence examples generally having higher multiplicity in this setting. We note that, in this illustration, the simple relationship between confidence and predictive multiplicity is an artifact of normalized features, i.e.,  $\|x\| = 1$. In general, examples with high-confidence predictions can display high predictive multiplicity after DP-ensuring training, as illustrated in \cref{sec:exp-main}.

Other methods for DP training, such as gradient perturbation~\cite{abadi2016deep}, are not as straightforward to analyze theoretically. In the next sections, we study predictive multiplicity of these algorithms using a Monte-Carlo method.

\section{Measuring Predictive Multiplicity of Randomized Algorithms}
\label{sec:estimator}
Theoretically characterizing predictive multiplicity of DP algorithms beyond the output-perturbation mechanism and for more complex model classes is a challenging problem (see, e.g.~\cite[Section~4]{hsu2022rashomon}). For instance, the accuracy and generalization behavior of the DP-SGD algorithm \cite{abadi2016deep} used for DP training of neural networks is an active area of research (e.g., \cite{wang2023generalization}). Even in simpler model classes, where training amounts to solving a convex optimization problem (e.g., support vector machines), DP mechanisms such as objective perturbation \cite{chaudhuri2011differentially} display a complex interplay between privacy, accuracy, and distortion of model parameters.

For these theoretically intractable cases, we adopt a simple { Monte-Carlo strategy~\cite{d2020underspecification,black2021selective}}: Train multiple models on the same dataset with different randomization seeds, and compute statistics of the outputs of these models. { Note that this procedure does not preserve differential privacy, which we discuss in more detail in \cref{sec:open}.}

In this section, we formalize this simple and intuitive approach, and provide the first sample complexity bound for estimating predictive multiplicity. Our bound has a closed-form expression, so a practitioner can use it to determine how many re-trainings are required  to estimate predictive multiplicity up to a given approximation error. 

At first, re-training might appear as a blunt approach for analyzing predictive multiplicity in DP. Our results indicate that this is not the case.  Surprisingly, we prove that, if one wants to estimate disagreement in \cref{def:disagreement} for $k$ input examples, the number of required re-trainings increases \emph{logarithmically} in $k$.  This result demonstrates that re-training can be an effective strategy to estimate predictive multiplicity regardless of the intricacies of a specific DP mechanism, and that a moderate number of re-trainings is sufficient to estimate disagreement for a large number of examples.

Recall that, according to \cref{stmt:mult-var}, disagreement of an example $x$ is proportional to the variance of outputs within the model distribution $\rashomon$. We use this connection to provide an unbiased estimator for disagreement.
\begin{proposition}
\label{prop:unbiasedest}
Suppose we have $m$ models sampled from the model distribution: $\theta_1, \theta_2, \ldots, \theta_m \sim \rashomon$. Then, the following expression is an unbiased estimator for disagreement $\predmult(x)$ for a single example $x \in \domain$:
\begin{equation}\label{eq:disagreement-estimator}
\hat \predmult(x) \define 4 \, \frac{m}{m-1} \, \hat p_x(1 - \hat p_x),
\end{equation}
where $\hat p_x = \frac{1}{m} \sum_{i = 1}^m f_{\theta_i}(x)$ is the sample mean of $f_\theta(x)$.
\end{proposition}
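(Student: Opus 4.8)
The plan is to verify unbiasedness directly by computing $\E[\hat\predmult(x)]$ over the randomness of the $m$ independent draws $\theta_1,\dots,\theta_m \sim \rashomon$, and showing it equals $\predmult(x) = 4 p_x(1-p_x)$, using \cref{stmt:mult-var}. The key observation is that $f_{\theta_i}(x) \sim \mathrm{Bernoulli}(p_x)$ i.i.d.\ across $i$, so $\hat p_x = \frac1m \sum_{i=1}^m f_{\theta_i}(x)$ is exactly the sample mean of i.i.d.\ Bernoulli random variables. The quantity $\hat p_x(1-\hat p_x)$ is, up to the $\frac{m}{m-1}$ Bessel-type correction, the sample variance of these Bernoulli variables, and the claim is really just the standard fact that the bias-corrected sample variance is an unbiased estimator of the population variance, specialized to the Bernoulli case.

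Concretely, I would first write $\E[\hat p_x] = p_x$ by linearity. Then I would compute $\E[\hat p_x^2]$ by splitting the double sum $\hat p_x^2 = \frac{1}{m^2}\sum_{i,j} f_{\theta_i}(x) f_{\theta_j}(x)$ into the $m$ diagonal terms, each with expectation $\E[f_{\theta_i}(x)^2] = \E[f_{\theta_i}(x)] = p_x$ (since $f_{\theta_i}(x)$ is $\{0,1\}$-valued), and the $m(m-1)$ off-diagonal terms, each with expectation $p_x^2$ by independence. This gives $\E[\hat p_x^2] = \frac{1}{m^2}\big(m p_x + m(m-1) p_x^2\big) = \frac{p_x}{m} + \frac{m-1}{m} p_x^2$. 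Hence $\E[\hat p_x(1-\hat p_x)] = \E[\hat p_x] - \E[\hat p_x^2] = p_x - \frac{p_x}{m} - \frac{m-1}{m} p_x^2 = \frac{m-1}{m}\big(p_x - p_x^2\big) = \frac{m-1}{m} p_x(1-p_x)$.

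Multiplying by $4\,\frac{m}{m-1}$ cancels the correction factor exactly, yielding $\E[\hat\predmult(x)] = 4 p_x(1-p_x) = \predmult(x)$ by \cref{stmt:mult-var}, which is the claim. One edge case to note for completeness is $m=1$, where the estimator is ill-defined (division by zero), so the statement implicitly assumes $m \geq 2$; I would mention this in passing. Honestly there is no real obstacle here — the only thing to be careful about is the Bernoulli identity $f_{\theta_i}(x)^2 = f_{\theta_i}(x)$ that makes the diagonal terms contribute $p_x$ rather than $p_x^2$, which is precisely what produces the nonzero bias of the uncorrected estimator and motivates the $\frac{m}{m-1}$ factor.
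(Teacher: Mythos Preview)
Your proposal is correct and follows essentially the same route as the paper: both compute $\E[\hat p_x(1-\hat p_x)] = \E[\hat p_x] - \E[\hat p_x^2]$ and show it equals $\frac{m-1}{m}p_x(1-p_x)$, then invoke \cref{stmt:mult-var}. The only cosmetic difference is that the paper evaluates $\E[\hat p_x^2]$ via $\mathrm{Var}(\hat p_x) + (\E[\hat p_x])^2 = \frac{p_x(1-p_x)}{m} + p_x^2$, whereas you expand the double sum directly; the two computations are equivalent.
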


How many models $\theta_1, \theta_2, \ldots, \theta_m$ do we need to sample in order to estimate disagreement?
To answer this, we provide an upper bound on estimation accuracy given the number of samples from the model distribution, as well as a bound on the number of samples required for a given level of estimation accuracy.
\begin{proposition}
\label{prop:single}
    For $m$ models sampled from the model distribution, $\theta_1, \theta_2, \ldots, \theta_m \sim P_{\train(\dataset)}$, with probability at least $1 - \rho$, for $\rho \in (0, 1]$ the additive estimation error $\alpha \define |\hat \predmult(x)-\predmult(x)|$ satisfies:
    \begin{equation}
    \alpha \leq \frac{1}{(m-1)} + 4 \frac{m}{m-1}\sqrt{\frac{\log(2/\rho)}{2m}}\left(1+\sqrt{\frac{\log(2/\rho)}{2m}}\right).
    \end{equation}
\end{proposition}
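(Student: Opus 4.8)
The plan is to reduce the problem to controlling the deviation of the empirical mean $\hat p_x$ from $p_x$. Recall that $f_{\theta_1}(x), \ldots, f_{\theta_m}(x)$ are independent $\{0,1\}$-valued random variables, each with mean $p_x$ (independence holds because the training runs use independent randomness, and the Bernoulli structure was noted before \cref{stmt:mult-var}). Once $|\hat p_x - p_x|$ is under control, I propagate that deviation through the smooth map $g(p) \define p(1-p)$ together with the finite-sample correction factor $\tfrac{m}{m-1}$ that appears in $\hat\predmult(x)$.

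First I would apply Hoeffding's inequality to $\hat p_x = \tfrac{1}{m}\sum_{i=1}^m f_{\theta_i}(x)$: using the two-sided form, with probability at least $1-\rho$ we have $|\hat p_x - p_x| \le \epsilon$ where $\epsilon \define \sqrt{\log(2/\rho)/(2m)}$. I condition on this event for the rest of the argument.

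Next I would decompose the estimation error around the correction factor. Writing $\predmult(x) = 4g(p_x)$ and $\hat\predmult(x) = 4\tfrac{m}{m-1}g(\hat p_x)$, I split
\begin{equation*}
  \hat\predmult(x) - \predmult(x) = 4\,\tfrac{m}{m-1}\bigl(g(\hat p_x) - g(p_x)\bigr) + \tfrac{4}{m-1}\,g(p_x).
\end{equation*}
The second term is bounded by $\tfrac{1}{m-1}$ since $g(p_x) \le \tfrac14$, which produces the leading term of the claimed bound. For the first term I would use the identity $g(\hat p_x) - g(p_x) = (\hat p_x - p_x)(1 - \hat p_x - p_x)$ together with the crude estimate $|1 - \hat p_x - p_x| \le |1 - 2p_x| + |\hat p_x - p_x| \le 1 + \epsilon$, giving $|g(\hat p_x) - g(p_x)| \le \epsilon(1+\epsilon)$. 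Substituting the value of $\epsilon$ and combining the two pieces via the triangle inequality yields exactly the stated inequality.

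There is no genuine obstacle here; the only point requiring some care is the natural way to peel off the $\tfrac{m}{m-1}$ factor — pulling it out in front of the difference $g(\hat p_x) - g(p_x)$ (rather than bounding $\tfrac{m}{m-1}g(\hat p_x)$ directly) is what isolates the clean $\tfrac{1}{m-1}$ leading term and leaves the $4\tfrac{m}{m-1}\epsilon(1+\epsilon)$ shape. One should also verify that Hoeffding is applied to the indicator outputs $f_{\theta_i}(x)$ (not to the confidence scores) and that the $2/\rho$ inside the logarithm is the cost of the two-sided bound.
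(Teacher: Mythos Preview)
Your proposal is correct and follows essentially the same route as the paper: the paper also writes $\hat p_x = p_x + \nu$, expands $\tfrac{m}{m-1}(p_x+\nu)(1-p_x-\nu) - p_x(1-p_x)$ into the two pieces $\tfrac{1}{m-1}p_x(1-p_x)$ and $\tfrac{m}{m-1}\nu(1-2p_x-\nu)$, bounds $|1-2p_x-\nu|\le 1+|\nu|$, and then plugs in the Hoeffding deviation $\eta=\sqrt{\log(2/\rho)/(2m)}$. Your use of the factorization $g(\hat p_x)-g(p_x)=(\hat p_x-p_x)(1-\hat p_x-p_x)$ is just a cleaner way of writing the same expansion.
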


For example, this bound yields that 5,000 re-trainings result in the estimation error of at most $0.08$ with probability $95\%$. In \cref{app:estimator}, we provide a closed-form expression for computing the number of samples $m$ required to achieve a given error level $\alpha$. We also provide a visualization of the bound in \cref{fig:estimator-rate} (Appendix).

In practice, one might need to estimate disagreement for multiple examples, e.g., to compute average disagreement over a test dataset. When doing so na\"ively, the re-training costs could mount to infeasible levels if we assume that each estimation requires the same number of models, $m$, for each input example. In contrast, we show that in such cases sample complexity grows only logarithmically.

\begin{proposition}\label{prop:multi}
Let $x_1, x_2, \ldots, x_k \in \domain$. 
If $\theta_1, \theta_2, \ldots, \theta_m \sim \rashomon$ are i.i.d. samples from the model distribution, then with probability at least $1 - \rho$, for $\rho \in (0, 1]$ the maximum additive error satisfies:
\begin{equation}
    \begin{aligned}
    \max_{j \in 1, \ldots, k} |\predmult(x_j) - \hat \predmult(x_j)| &\leq \frac{1}{(m-1)} + \\
        &+ \frac{4m}{m-1}\sqrt{ \frac{\log (2k/\rho)}{2m}}\left(1+\sqrt{\frac{\log (2k/\rho)}{2m}}\right).
    \end{aligned}
\end{equation}
\end{proposition}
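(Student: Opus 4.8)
The plan is to derive \cref{prop:multi} as a union-bound strengthening of \cref{prop:single}. The single-example analysis already isolates the only stochastic ingredient, namely the deviation of the empirical frequency $\hat p_{x} = \frac{1}{m}\sum_{i=1}^m f_{\theta_i}(x)$ from $p_x$; everything downstream is a deterministic Lipschitz-type estimate for the map $p \mapsto 4p(1-p)$ composed with the bias-correction factor $\frac{m}{m-1}$. So I only need to (i) control $\hat p_{x_j}$ simultaneously for all $k$ examples, and (ii) push that control through the estimator.

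First, fix one example $x_j$. Since $\theta_1, \dots, \theta_m$ are i.i.d.\ draws from $\rashomon$, the indicators $f_{\theta_1}(x_j), \dots, f_{\theta_m}(x_j)$ are i.i.d.\ $\mathrm{Bernoulli}(p_{x_j})$ random variables supported on $[0,1]$, so Hoeffding's inequality gives $\Pr[\,|\hat p_{x_j} - p_{x_j}| \ge t\,] \le 2\exp(-2mt^2)$ for every $t > 0$. Choosing $t \define \sqrt{\log(2k/\rho)/(2m)}$ makes the right-hand side at most $\rho/k$. A union bound over $j = 1, \dots, k$ then shows that the event $\{\,|\hat p_{x_j} - p_{x_j}| \ge t \text{ for some } j\,\}$ has probability at most $k \cdot (\rho/k) = \rho$. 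Hence, with probability at least $1-\rho$, we have $|\hat p_{x_j} - p_{x_j}| \le t$ for all $j$ at once. This is the only step where $k$ enters, and it enters only through $\log(2k/\rho)$ --- the logarithmic dependence claimed in the statement.

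Second, work deterministically on that high-probability event. Writing $g(p) \define 4p(1-p)$, we have $\predmult(x_j) = g(p_{x_j})$ and $\hat\predmult(x_j) = \frac{m}{m-1} g(\hat p_{x_j})$, so
\begin{equation*}
    |\hat\predmult(x_j) - \predmult(x_j)| \le \frac{m}{m-1}\,|g(\hat p_{x_j}) - g(p_{x_j})| + \frac{1}{m-1}\,g(p_{x_j}).
\end{equation*}
Since $0 \le g(p) \le 1$, the last term is at most $\frac{1}{m-1}$. For the first term, set $\delta_j \define \hat p_{x_j} - p_{x_j}$; a direct expansion gives $\hat p_{x_j}(1-\hat p_{x_j}) - p_{x_j}(1-p_{x_j}) = \delta_j(1 - 2p_{x_j} - \delta_j)$, so $|g(\hat p_{x_j}) - g(p_{x_j})| = 4|\delta_j|\,|1 - 2p_{x_j} - \delta_j| \le 4|\delta_j|(1 + |\delta_j|)$, using $|1 - 2p_{x_j}| \le 1$. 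Substituting $|\delta_j| \le t$ and maximizing over $j$ yields exactly the stated inequality.

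I do not expect a real obstacle here: the argument is essentially \cref{prop:single} plus a union bound, and the only thing requiring care is the last step, where the $\frac{m}{m-1}$ factor and the quadratic term $\delta_j^2$ must be tracked so that the constants match the stated bound rather than being absorbed into order notation. The one genuinely noteworthy point --- already emphasized in the surrounding text --- is that replacing $\rho$ by $\rho/k$ costs only an additive $\log k$ inside the square root, so a moderate number of re-trainings $m$ suffices to estimate disagreement accurately for a large batch of examples simultaneously.
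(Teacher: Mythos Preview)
Your proposal is correct and follows essentially the same approach as the paper: a Hoeffding bound on each $\hat p_{x_j}$, a union bound over the $k$ examples to replace $\rho$ by $\rho/k$, and then the same deterministic expansion of $\hat p(1-\hat p) - p(1-p)$ already used in \cref{prop:single}. The only cosmetic difference is that you phrase the deterministic step as a triangle inequality splitting off the bias-correction factor, whereas the paper expands directly; both yield the identical constants.
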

This positive result shows that auditing models for predictive multiplicity for large populations and datasets is practical, as the sample complexity grows slowly in the number of examples.

\section{Empirical Studies}
\label{sec:exp}
In this section, we empirically explore the predictive multiplicity  of DP algorithms. We use a low-dimensional synthetic dataset in order to visualize the level of multiplicity across the input space. To study predictive-multiplicity effects in realistic settings, we use real-world tabular datasets representative of high-stakes domains, namely lending and healthcare, and one image dataset.
The code to reproduce our experiments is available at:
\begin{center}
\href{https://github.com/spring-epfl/dp_multiplicity}{github.com/spring-epfl/dp\_multiplicity}
\end{center}

\subsection{Experimental Setup}

\paragraph{Datasets and Tasks}

We use the following datasets: 
\begin{itemize}
\item A \textbf{Synthetic} dataset containing data belonging to two classes with class-conditional distributions $X_0 \sim \mathcal{N}(\mu_0, \Sigma_0)$ and $X_1 \sim \mathcal{N}(\mu_1, \Sigma_1)$, respectively. We set the distribution parameters to be:
\begin{equation}
    \begin{aligned}
    &\mu_0 = [1, 1], &\Sigma_0 = \begin{pmatrix}1 & \nicefrac{1}{2} \\ \nicefrac{1}{2} & 1\end{pmatrix},
    \\
    &\mu_1 = [-1, -1], &\Sigma_1 = \begin{pmatrix}1 & \nicefrac{1}{10} \\ \nicefrac{1}{10} & 1\end{pmatrix}.
    \end{aligned}
\end{equation}
The classes in this synthetic dataset are well-separable by a linear model (see \cref{fig:syn-demo}) %

    \item Credit Approval tabular dataset (\textbf{Credit}). The task is to predict whether a credit card application should be approved or rejected based on several attributes which describe the application and the applicant.
    \item Contraceptive Method Choice tabular dataset (\textbf{Contracep\-tion}) based on 1987 National Indonesia Contraceptive Prevalence Survey. The task is to predict the choice of a contraception method based on demographic and socio-economic characteristics of a married couple.
    \item Mammographic Mass tabular dataset (\textbf{Mammography}) collected at the Institute of Radiology of the University Erlangen-Nuremberg in 2003 -- 2006. The task is to predict whether a screened tumor is malignant or benign based on several clinical attributes.
    \item \textbf{Dermatology} tabular dataset. The task is to predict a dermatological disease based on a set of clinical and histopathological attributes.
    \item \textbf{CIFAR-10}~\cite{krizhevsky2009learning}, an \emph{image} dataset of pictures labeled as one of ten classes. The task is to predict the class.
\end{itemize}
We take the realistic tabular datasets (Credit, Contraception, Mammography, and Dermatology) from the University of California Irvine Machine Learning (UCIML) dataset repository~\cite{uciml}. In \cref{sec:exp-details}, we provide additional details about processing of the datasets, and a summary of their characteristics (\cref{tab:datasets}).

For the synthetic dataset, we obtain the training dataset by sampling 1,000 examples from each of the distributions. In order to have precise estimates of population accuracy, we sample a larger test dataset of 20,000 examples. For tabular datasets, we use a random $75\%$ subset for training, and use the rest as a held-out test dataset for model evaluations. For CIFAR-10, we use the default 50K/10K train-test split.

\paragraph{Models and Training Algorithms}
For the synthetic and tabular datasets, we use logistic regression with objective perturbation~\cite{chaudhuri2011differentially}.
{ For the image dataset, we train a convolutional neural network on ScatterNet features~\cite{oyallon2015deep} using DP-SGD~\cite{abadi2016deep}, following the approach by \citet{tramer2021differentially}. We provide more details in \cref{sec:exp-details}. }

\paragraph{Metrics} The goal of our experiments is to quantify predictive multiplicity and explain the factors which impact it. For all settings, we measure disagreement to capture the dissimilarity of predictions, and predictive performance of the models to quantify the effect of performance on multiplicity. Concretely, we measure:
\begin{itemize}
    \item \textbf{Disagreement} for examples on a test dataset, computed using the unbiased estimator in \cref{sec:estimator}. As this disagreement metric is tailored to binary classification, we use a special procedure for the ten-class task on CIFAR-10: we treat each multi-class classifier as ten binary classifiers, and we report average disagreement across those ten per-class classifiers. { Additionally, in \cref{sec:exp-details}, we also report predictive multiplicity in terms of confidence scores instead of predictions following the recent approach by \citet{watson2022predictive}. }
    \item { \textbf{Performance} on a test dataset. For tabular datasets, we report the standard area under the ROC Curve (AUC for short). For CIFAR-10, we report accuracy.}
\end{itemize}

\paragraph{Experiment Outline}
For a given dataset and a value of the privacy parameter $\varepsilon$, we train multiple models on exactly the same data \textit{with different randomization seeds}. %

For the synthetic and tabular datasets, we use several values of $\varepsilon$ between 0.5 (which provides a good guaranteed level of privacy~\cite[see, e.g.][Section 4]{wood2018differential}) and 2.5, with $\delta = 0$. For each value of $\varepsilon$ we train $m = 5{\small,}000$ models. 
For CIFAR-10, we train $m = 50$ neural-network models because of computational constraints. We use DP-SGD parameters that provide privacy guarantees from $\varepsilon \approx 2$ to $\varepsilon \approx 7$ at the standard choice of $\delta = 10^{-5}$.

\begin{figure*}
    \centering
    \begin{subfigure}[t]{.7\textwidth}
        \hspace{1cm}
        \includegraphics[width=\linewidth]{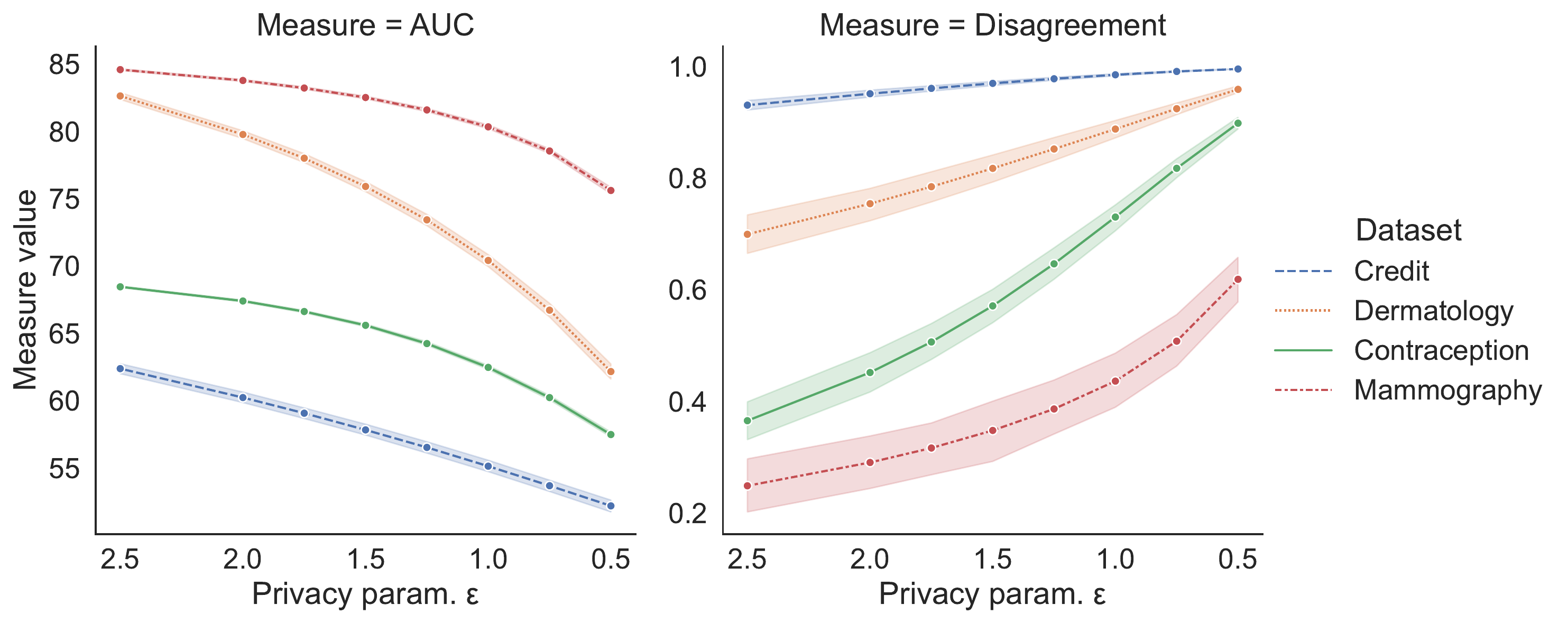}
        \vspace{-1.2em}
        \caption{Tabular datasets}
    \end{subfigure}
    \\[.5em]
    \begin{subfigure}[t]{.58\textwidth}
        \includegraphics[width=\linewidth]{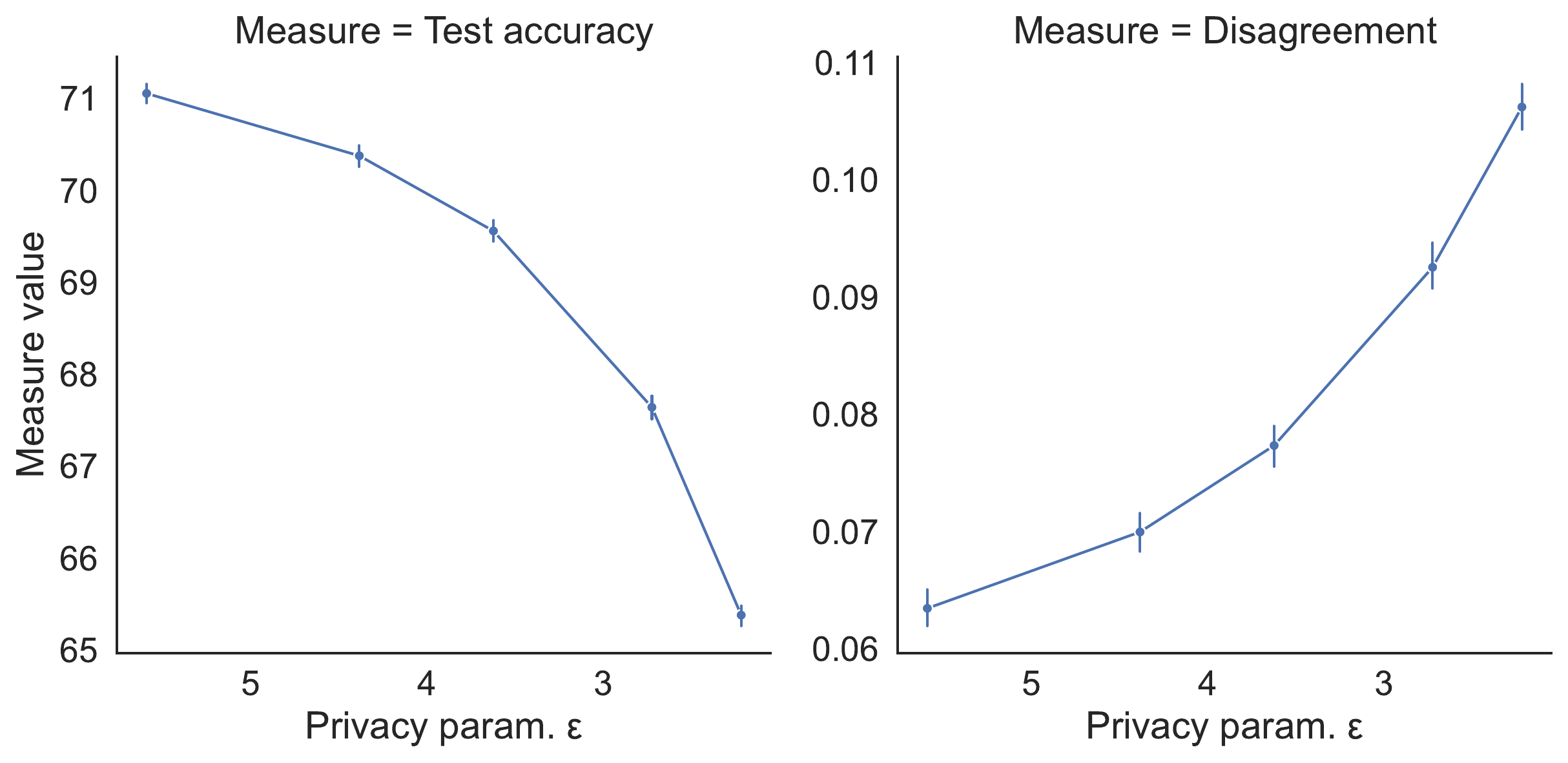}
        \vspace{-1.2em}
        \caption{Image dataset (CIFAR-10)}
    \end{subfigure}
    
    \caption{\textbf{Increasing the level of privacy increases the level of predictive multiplicity in real-world datasets.} For all plots, the x axis shows the level of privacy ($\varepsilon$, lower value is more privacy). The plots on the left shows the performance level (AUC for tabular datasets, and accuracy for CIFAR-10). The error bands/bars on the left side are 95\% confidence intervals (CI) over the models in the model distribution. The plots on the right show the degree of disagreement across $m = 5{\small,}000$ models in the case of tabular datasets, and across $m = 50$ models in the case of CIFAR-10. The error bands/bars on the right side are 95\% CI over the examples in a test dataset.  Although average disagreement might be relatively low for some datasets such as Mammography and CIFAR-10, there exist examples for which disagreement is 100\% (see \cref{tab:exp-stats} in the Appendix).}
    \label{fig:mult-main}
\end{figure*}

\subsection{Predictive Multiplicity and Privacy}
\label{sec:exp-main}

First, we empirically study how multiplicity evolves with increasing privacy. In \cref{fig:syn-demo}, we visualize the two-dimensional synthetic examples and their disagreement for different privacy levels.
As privacy increases, so do the areas for which model decisions exhibit high disagreement (darker areas).
Although the regions with higher disagreement correlate with model confidence and accuracy, the level of privacy contributes significantly. For instance, some points which are relatively far from the decision boundary, which means they are confidently classified as either class, can nevertheless have high predictive multiplicity.

\cref{fig:mult-main} shows the experimental results for our tabular datasets and CIFAR-10. On the left side, we show the relationship between the privacy level and performance. On the right, between the privacy level and disagreement. As with the theoretical analysis and the results on synthetic data, we can clearly see that models with higher level of privacy (low $\varepsilon$) invariably exhibit higher predictive multiplicity. { Notably, even for datasets such as Mammography and CIFAR-10 for which \emph{average} disagreement is relatively low, there exist examples whose disagreement is 100\%. See \cref{tab:exp-stats} in the Appendix for detailed information on the distribution of the disagreement values across the test data.}

\vspace{1mm}\noindent\textbf{Implications.} The increase in the privacy level results in making more decisions which are partially or fully explained by randomness in training. { Let us give an example with a concrete data record from the Mammography dataset representing a 56-year-old patient labeled as having a malignant tumor. 
Classifiers with low level of privacy $\varepsilon=2.5$ predict the correct malignant class for this individual most of the time (approx. 55\% disagreement). If we set the level of privacy to the high $\varepsilon=0.5$, this record is classified close to 42\% of the time as benign, and 58\% of the time as malignant (approx. 97\% disagreement). Thus, if one were to use a model with the high level of privacy to inform treatment of this patient, the model's decision would have been close in its utility to a coin flip.}

\subsection{What Causes the Increase in Predictive Multiplicity?}
In the previous section, we showed that the increase in privacy causes an increase in predictive multiplicity. It is not clear, however, what is the exact mechanism through which DP impacts predictive multiplicity. Hypothetically, the contribution to multiplicity could be through two pathways:
\begin{enumerate}
    \item[(1)] \emph{Direct:} The increase in predictive multiplicity is the result of the variability in the learning process stemming from randomization, regardless of the performance decrease.
    \item[(2)] \emph{Indirect}: The increase in predictive multiplicity is the result of the decrease in performance.
\end{enumerate}

\begin{figure}
    \centering
    \includegraphics[width=.8\linewidth]{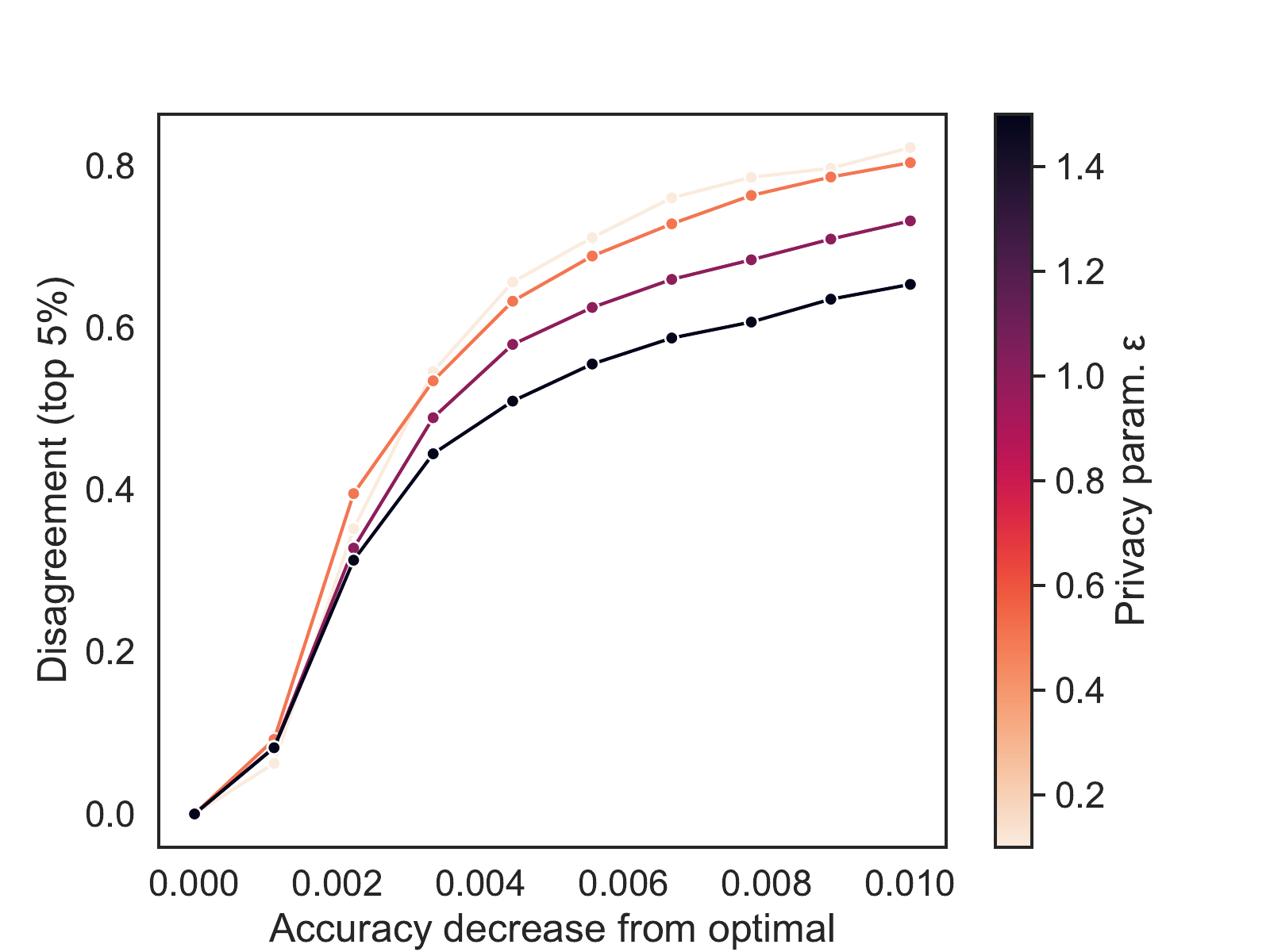}
    \caption{\textbf{Models achieving a similar level of accuracy can have different levels of predictive multiplicity.} The plot shows the top 5\% percentile of disagreement on the synthetic test dataset for all models which attain at least certain level of accuracy, for different values of the privacy parameter ($\varepsilon$, lower value is more privacy). The x axis shows the deviation of accuracy from that of an optimal non-private model, with 0 being equal to the accuracy of the optimal non-private model. As even such a small decrease in accuracy as 0.01 can see disagreement rise from 0 to 0.8 for some examples, this result suggests that the change in the level of privacy on its own can cause a big change in disagreement. %
    }
    \label{fig:syn-mult-acc}
\end{figure}

These two options are not mutually exclusive, and it is possible that both play a role.
In both cases, the desire for a given level of privacy---which determines the degree of  randomization added during training---is ultimately the cause of the increase in multiplicity. Nevertheless, how randomization contributes to the increase has practical implications: If our results are explained by pathway (2), we should be able to reduce the impact of privacy on predictive multiplicity by designing algorithms which achieve better accuracy at the same privacy level. 

For output perturbation, our analysis in \cref{sec:out-pert} shows that multiplicity is directly caused by randomization---path\-way (1)---as only the privacy level, confidence, and the norm of a predicted example impact disagreement. Therefore, performance does not have a direct impact on predictive multiplicity in output perturbation.

In \cref{fig:syn-mult-acc}, to quantify the impact of performance on predictive multiplicity for the case of objective perturbation, we show the top 5\% disagreement values for varying levels of accuracy on the synthetic dataset. We use the synthetic dataset to ensure that test accuracy estimates are reliable, as we have a large test dataset in this case.
We see that, for a given level of accuracy, different privacy parameters can result in different disagreement. This suggests that randomization caused by  DP training \textit{can have a direct effect} on predictive multiplicity, so we observe pathway (1).

\vspace{1mm}\noindent\textbf{Implications.} This observation indicates that there exist cases for which improving accuracy of a DP-ensuring algorithm at a given privacy level \textit{will not} necessarily lower predictive multiplicity.

\subsection{Disparities in Predictive Multiplicity}
\label{sec:exp-disparities}
The visualizations in \cref{fig:syn-demo} show that different examples can exhibit highly varying levels of predictive multiplicity. 
This observation holds for real-world datasets too. \cref{fig:tabular-disparities-dist} shows the distributions of the disagreement values across the population of examples in the test data for tabular datasets. For example, for lower privacy levels (high $\varepsilon$) on the Contraception dataset, there are groups of individuals with different values of predictive multiplicity. As the level of privacy increases (low $\varepsilon$), the disagreement tends to concentrate around 1, with decisions for a majority of examples largely explained by randomness in training.

Next, we verify if the differences in the level of disagreement also exist across demographic groups. In \cref{fig:tabular-disparities-group}, we show average disagreement across points from three different age groups in the Contraception dataset. As before, for low levels of privacy (high $\varepsilon$) we see more disparity in disagreement. The disparities even out as we increase the privacy level (low $\varepsilon$), with groups having average disagreement closer to 1. Thus, disagreement is not only unevenly distributed across individuals, but across salient demographic groups.

{
\vspace{1mm}\noindent\textbf{Implications.} As some groups and individuals can have higher predictive multiplicity than others, evaluations of training algorithms in terms of their predictive multiplicity must account for such disparities. For instance, our experiments on the Contraception dataset (in \cref{fig:tabular-disparities-group}) show that, for different privacy levels, decisions for individuals in the 16--30 age bracket exhibit higher predictive multiplicity than of patients between 30 and 40 years old. Predictions for individuals under 30, therefore, systematically exhibit more dependence on randomness in training than on the relevant features for prediction. This highlights the need to conduct disaggregated evaluations as opposed to only evaluating average disagreement on whole datasets.}

\begin{figure*}
    \centering
    \begin{subfigure}[t]{1.0\textwidth}
        \centering
        \includegraphics[width=1.0\linewidth]{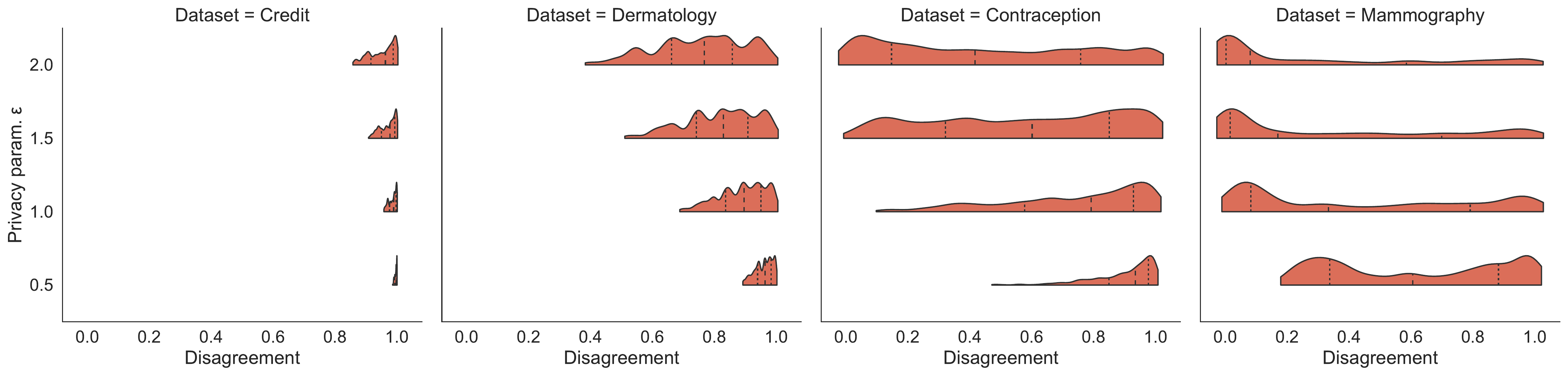}
        \caption{Distribution of disagreement values across the population in the test set in four tabular datasets.}
        \label{fig:tabular-disparities-dist}
        \vspace{1em}
    \end{subfigure}
    \begin{subfigure}[t]{1\textwidth}
        \centering
        \vspace{1em}
        \includegraphics[width=.84\linewidth]{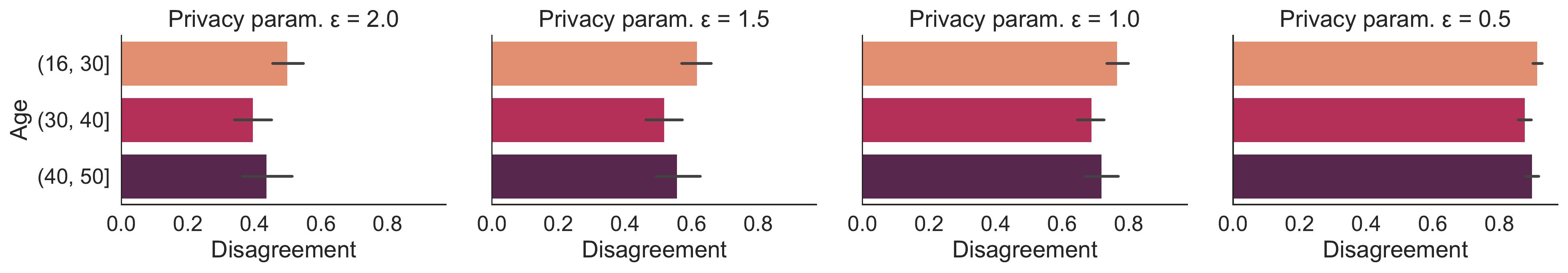}
        \caption{Group-level disparities in disagreement values on the Contraception dataset. Error bars are 95\% confidence intervals over the disagreement values in each subgroup.}
        \label{fig:tabular-disparities-group}
    \end{subfigure}
    
    \caption{\textbf{The level of predictive multiplicity varies from one example to another, and across population groups. As the level of privacy grows, more predictions exhibit similarly high disagreement.}
    }
    \label{fig:tabular-disparities}
\end{figure*}

\section{Related work}\label{sec:related-work}
\paragraph{Rashomon Effect and Predictive Multiplicity.}
The Rashomon effect, observed and termed by \citet{breiman2001statistical}, describes the phenomenon where a multitude of distinct models achieve similar average loss. The Rashomon effect occurs even for simple models such as linear regression, decision trees, and shallow neural networks \cite{auer1995exponentially}. 
When no privacy constraints are present, predictive multiplicity can be viewed as a facet of the Rashomon effect in classification tasks, where similarly-accurate models produce conflicting outputs.
One of the main challenges in studying predictive multiplicity is measuring it. 
\citet{semenova2022existence} proposed the Rashomon ratio to measure the Rashomon effect and used a Monte Carlo technique to sample decision tree models for estimation.
\citet{marx2020predictive} quantified predictive multiplicity using optimization formulations to find the worst-case disagreement among all candidate models while controlling for accuracy.
Recently, \citet{hsu2022rashomon,watson2022predictive} proposed other metrics for quantifying predictive multiplicity: Rashomon capacity and viable prediction range.
\citet{black2022model} proposed measures of predictive multiplicity which are applicable to randomized learning.
Our \cref{prop:multi} complements the prior work by providing a closed-form expression for sample complexity of estimating predictive multiplicity which arises due to randomness in training.

\paragraph{Side Effects of Differential Privacy.} To the best of our knowledge, our work is the first one to study the properties of DP training in terms of predictive multiplicity. Multiple works, however, have studied other unintended consequences of private learning. In particular, a number of works \cite{bagdasaryan2019differential, sanyal2022unfair, ganev2022robin} show that DP training comes at a cost of decreased performance for groups which are under-represented in the data. Relatedly, \citet{cummings2019compatibility} show that DP training is incompatible with some notions of algorithmic fairness.

\section{Discussion}

Our theoretical and empirical results show that training with DP and, more broadly,  applying randomization in training increases predictive multiplicity. We demonstrated that higher privacy levels result in higher multiplicity. 
If a training algorithm exhibits high predictive multiplicity for a given input example, the decisions supported by a model's output for this example lose their justifiability: these decisions depend on the randomness used in training rather than on relevant properties or features of this example. 
The connection between privacy in learning and decision arbitrariness might not be obvious to practitioners. 
This lack of awareness is potentially damaging in high-stakes settings (e.g., medical diagnostics, lending, education), where decisions of significant---and potentially life-changing---consequence could be significantly influenced by randomness used to ensure privacy. 

In this concluding section, we discuss whether predictive multiplicity is indeed a valid concern for DP-ensuring algorithms, and outline a path forward.

\subsection{Can the Increase in Predictive Multiplicity be Beneficial?}
Despite the harms of arbitrariness, one might argue that multiplicity can, in some cases, be beneficial.

\paragraph{Opportunities for Satisfying Desirable Properties Beyond Accuracy?} \citet{black2022model} and~\citet{semenova2022existence} argue that multiplicity presents a valuable opportunity. 
In non-private training, the existence of many models that achieve comparable accuracy creates an opportunity for selecting a model which satisfies both an acceptable accuracy level and other useful properties beyond performance, such as fairness \cite{coston2021characterizing}, interpretability \cite{fisher2019all}, or generalizability \cite{semenova2022existence}.
In order to leverage this opportunity, one needs to deliberately steer training towards the model which satisfies  desirable properties beyond accuracy, or search the ``Rashomon set'' of good models \cite{fisher2019all}. 
However, with  randomization alone (e.g., adding Gaussian noise to gradients in training), model designers cannot steer training without compromising DP guarantees, and can only arrive at a model which satisfies additional desirable properties by chance. Thus, this positive side of the multiplicity phenomenon  is not necessarily present in DP-ensuring training. 

It is an open problem to find whether specially-crafted noise distributions or post-processing techniques could be designed to provide the same level of privacy as the standard approaches, and at the same time attain additional useful properties such as fairness.

\paragraph{Predictive Multiplicity is Individually Fair?} 
Individual fairness~\cite{dwork2012fairness} is a formalization of the ``treat like alike'' principle: an individually fair classifier makes similar decisions for individuals who are thought to be similar. 
A way to formally satisfy individual fairness is, in fact, through randomization of decisions. This could lead to an argument that predictive multiplicity is individually fair. For instance, suppose that a predictive model used to assist with hiring decisions is applied to several individuals who are all equally qualified to get the job. Consider two possible decision rules for selecting the candidate to hire with different multiplicity levels. The first rule 
 has high multiplicity: produce a random decision. The second rule has low multiplicity: select a candidate based on lexicographic order. 
As the second decision rule results in a breach of individual fairness and, possibly, a systemic exclusion of some candidates, the first rule with high multiplicity seems preferable.%

This argument, however, only holds if there is randomness \textit{at the prediction stage}. This is not the case for standard DP-ensuring algorithms such as the ones we study. DP training produces one deterministic classifier that is used for all predictions. 
Thus, once training is done, there is no randomization of decisions as in the example above. Thus, the decisions due to such DP-ensuring models are no different than arbitrary rules such as selection based on lexicographic ordering.

\paragraph{Overcoming the Algorithmic Leviathan?} 
\citet{creel2022algorithmic} consider a setting where different decision-making systems which have high impact on an individual's livelihood, e.g., credit scoring systems from competing bureaus in the USA~\cite{citron2014scored}, are trained in ways that lead to all of them outputting the same decisions. 
This \textit{algorithmic monoculture} would completely remove the possibility of accessing resources for some individuals, as turning to a competing decision-maker would not change the outcome. 
In this case, \citeauthor{creel2022algorithmic} argue that high predictive multiplicity could be a desirable property as it enables to access resources across the decision-makers. 

In some high-stakes settings, such as healthcare, an algorithmic monoculture might \emph{not} pose a concern. Indeed, one would wish that predictive models used as a part of a diagnostic procedure for a disease output a consistent decision so that patients can be treated (or not treated) as needed. In this scenario, in fact, predictive multiplicity could potentially harm patients by either delaying a patient's treatment, or recommending a treatment when the patient is healthy. 
In such settings, the positive impact of predictive multiplicity in avoiding an algorithmic Leviathan loses meaning. 

Regardless of whether algorithmic monoculture is a legitimate concern or not for a given application, it is helpful for model designers and decision subjects to be informed of the level of predictive multiplicity, whether to gauge the likelihood of recourse, or brace for the arbitrariness of decisions.

\subsection{Open Problems}
\label{sec:open}

\paragraph{Reporting Mutiplicity}
Potential mitigations of the harms of predictive multiplicity could be to abstain from outputting a prediction with high multiplicity, or to communicate the magnitude of  multiplicity to the stakeholders. Doing so is challenging: any sort of communication of disagreement values could partially reveal information about the privacy-sensitive training data and break DP guarantees. Consider, as before, the setting of using a predictive model to assist in a medical diagnosis.
Whether a model abstains from predictions or outputs them along with disagreement estimates, there is a certain amount of information leakage about the training data to doctors. If the disagreement estimates are computed on privacy-sensitive data and are used without appropriate privatization---whether published or used to decide on abstention---they can reveal information about the data. 
To address this issue, one could use privacy-preserving technologies such as DP to abstain from making a prediction based on a high disagreement value or report the disagreement estimate in a privacy-preserving way. 
Studying whether effective privatization of disagreement computations is possible is an open problem for future work.

\paragraph{General Characterization of the Predictive-Multiplicity Costs of DP} 
We have theoretically characterized the predictive-multiplicity behavior of the output-perturbation mechanism as applied to logistic regression. Doing so for other mechanisms and model families is a non-trivial undertaking. In this work, we resort to empirical measurement with re-training. An open problem is finding whether we can characterize these behaviors for a wider range of model families, mechanisms, or even for any general mechanism which satisfies DP.

\subsection{Recommendations Moving Forward}
As discussed in the previous sections, existing techniques do not enable model designers to eliminate, or even mitigate, the implications of predictive multiplicity when using DP-ensuring models.
We have pointed out which open problems would need to be solved in order to reduce the impact of predictive multiplicity in high-stakes privacy-sensitive scenarios. Until DP mechanisms that mitigate multiplicity become available, the negative effects of multiplicity can only be countered by \textit{auditing for multiplicity} prior to deployment. Therefore, in order to understand the impact of privacy on the justifiability of model decisions, model designers should directly measure predictive multiplicity when using DP training, e.g., using the methods we introduce in \cref{sec:estimator}. %
If at the desired level of privacy the training algorithm exhibits high predictive multiplicity (either in general or for certain populations), model designers should carefully consider whether the use of such models is justified in the first place.

\begin{acks}
This work is partially funded by the Swiss National Science Foundation under grant 200021-188824, and the US National Science Foundation under grants CAREER 1845852, FAI 2040880, and CIF 1900750. Hsiang Hsu acknowledges support from Meta Ph.D. Fellowship. The authors would like to thank Salil Vadhan, Borja Balle, Jakab Tardos, and the anonymous reviewers at FAccT 2023 for their helpful feedback.
\end{acks}

\bibliographystyle{ACM-Reference-Format}
\bibliography{main}

\appendix

\section{Omitted Proofs and Derivations}
\label{app:proofs}

\subsection{\cref{sec:background}}
\label{app:background}

First, we provide an explanation on the range of disagreement without normalization:
\begin{proposition}[Range of non-normalized disagreement]
The expression $\Pr[f_{\theta}(x) \neq f_{\theta'}(x)]$ has range of $[0, 0.5]$.
\end{proposition}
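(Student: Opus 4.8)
The plan is to reduce the claim to the elementary fact that two i.i.d.\ Bernoulli variables disagree with probability $2p(1-p)$, and then to optimize this scalar expression over $p \in [0,1]$. First I would invoke the observation made just before \cref{stmt:mult-var}: for a fixed input $x$, the prediction $f_\theta(x)$ with $\theta \sim \rashomon$ is a Bernoulli random variable with some parameter $p_x \in [0,1]$, and $\theta, \theta'$ are drawn independently. Conditioning on the four possible joint outcomes of $(f_\theta(x), f_{\theta'}(x))$, the disagreement event has probability $p_x(1-p_x) + (1-p_x)p_x = 2p_x(1-p_x)$. (Equivalently, this is simply $\predmult(x)/2$ by \cref{stmt:mult-var}.)

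Next I would analyze the scalar function $g(p) \define 2p(1-p)$ on $[0,1]$. It is a concave parabola with $g(0) = g(1) = 0$ and a unique maximum at $p = 1/2$ where $g(1/2) = 1/2$; hence $0 \le g(p) \le 1/2$ for every $p \in [0,1]$. Substituting $p = p_x$ gives the containment $\Pr[f_\theta(x) \neq f_{\theta'}(x)] \in [0, 1/2]$.

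Finally, to justify that the range is exactly $[0, 1/2]$ rather than a strict subset, I would exhibit that $p_x$ can take any value in $[0,1]$ over choices of training algorithm $\train$ and example $x$: a deterministic $\train$ yields $p_x \in \{0,1\}$ and thus disagreement $0$; a $\train$ that outputs a parameter vector so that $h_\theta(x) > q$ with probability exactly $1/2$ yields $p_x = 1/2$ and disagreement $1/2$; and intermediate randomization realizes every value in between. Since $g$ is continuous, its image over $p_x \in [0,1]$ is all of $[0, 1/2]$, establishing the stated range. There is essentially no obstacle here; the only point requiring a little care is making the attainability argument explicit, so that the word ``range'' (as opposed to merely an upper bound) is warranted.
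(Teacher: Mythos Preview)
Your proposal is correct and follows essentially the same approach as the paper: write $\Pr[f_\theta(x)\neq f_{\theta'}(x)] = 2p(1-p)$ by conditioning on the joint outcomes of two i.i.d.\ Bernoulli$(p)$ variables, then observe this quadratic lies in $[0,\tfrac12]$. The paper's own proof is in fact terser than yours and does not explicitly argue attainability of the endpoints, so your extra paragraph on that point only adds rigor.
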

\begin{proof}
As $f_\theta(x) \in \{0, 1\}$, we can assume $\Pr[f_\theta(x)=1]=p$, and thus $\Pr[f_{\theta}(x) \neq f_{\theta'}(x)] = \Pr[f_{\theta}(x)=0\; \text{and}\; f_{\theta'}(x)=1] + \Pr[f_{\theta}(x)=1\; \text{and}\; f_{\theta'}(x)=0] = 2p(1-p) \in [0,0.5]$.
\end{proof}

Next, we provide a proof that disagreement is proportional to variance in our setup:
\begin{proof}[Proof of Proposition \ref{stmt:mult-var}]
As $f_\theta(x) \in \{0, 1\}$, we have that
\begin{equation}
\begin{aligned}
    \predmult(x) &= 2 \Pr_{\theta, \theta' \sim \rashomon}[f_{\theta}(x) \neq f_{\theta'}(x)] \\
    &= 2 \, \E_{\theta, \theta' \sim \rashomon}[\id[f_{\theta}(x) \neq f_{\theta'}(x)]] \\
    &= 2 \, \E_{\theta, \theta' \sim \rashomon}[(f_\theta(x) - f_{\theta'}(x))^2] \\
    &= 2 \, \E_{\theta \sim \rashomon}[f_{\theta}^2(x)] - 4 \, \E_{\theta, \theta' \sim \rashomon}[ f_{\theta}(x) \cdot f_{\theta'}(x)] \\
    & \quad + 2 \E_{\theta' \sim \rashomon}[f_{\theta'}^2(x)] \\
    &= 4 \, (\E_{\theta \sim \rashomon}[ f_{\theta}(x)]^2 \\
    & \quad - \E_{\theta \sim \rashomon}[f_{\theta}(x)] \cdot \E_{\theta' \sim \rashomon}[f_{\theta'}(x)]) \\
    &= 4 \, \mathrm{Var}_{\theta \sim \rashomon}(f_\theta(x))\\
    &= 4p_x(1-p_x),
\end{aligned}
\end{equation}
where $p_x(1-p_x)$ is the population variance of the r.v. $f_\theta(x) \sim \mathrm{Bernoulli}(p_x)$.
\end{proof}

\subsection{\cref{sec:out-pert}}

\begin{proof}[Proof of Proposition \ref{prop:outpert}]
First, observe that the expression
\[
    p_x = \E_{\theta_\mathsf{priv} \sim \rashomon}[f_{\theta_\mathsf{priv}}(x)]
\]
can be expressed as:
\begin{equation}\label{eq:px}
    \begin{aligned}
        \E[f_{\theta_\mathsf{priv}}(x)] &= \E[\id[ \mathsf{sigmoid}(\theta_\mathsf{priv}^\intercal x) > 0.5 ]] \\
        &= \E[\id[ \theta_\mathsf{priv}^\intercal x > 0 ]] \\
        &= \Pr( \theta_\mathsf{priv}^\intercal x > 0 ).
    \end{aligned}
\end{equation}

Denoting by $\xi \define \mathcal{N}(0, 1)$ and $\xi_d \define \mathcal{N}(0, I_d)$, we can see that the score $\theta_\mathsf{priv}^\intercal \, x$ is equal to:
\begin{equation}\label{eq:closed-noise}
    \begin{aligned}
    \theta_\mathsf{priv}^\intercal \, x &= (\theta_\mathsf{np} + \sigma \xi_d)^\intercal x \\
    &= \theta_\mathsf{np}^\intercal \, x + \sigma \sum_{i=1}^d x_i \xi \\
    &= \theta_\mathsf{np}^\intercal \, x + \sqrt{\sum_{i=1}^d x_i^2} \cdot \sigma\xi \\
    &= \theta_\mathsf{np}^\intercal \, x + \|x\| \sigma \xi.
    \end{aligned}
\end{equation}

Plugging in the closed form in \cref{eq:closed-noise} into \cref{eq:px}, we get:
\begin{equation}
    p_x = \Pr(\theta_\mathsf{np}^\intercal \, x + \|x\| \sigma \xi > 0) = \Pr\left(\xi > -\frac{\theta_\mathsf{np}^\intercal \, x}{\|x\| \cdot \sigma}\right) = \Phi\left( \frac{\theta_\mathsf{np}^\intercal \, x}{\|x\| \cdot \sigma} \right).
\end{equation}

\end{proof}

\subsection{\cref{sec:estimator}}
\label{app:estimator}

\begin{proof}[Proof of Proposition \ref{prop:unbiasedest}]
The $\nicefrac{1}{m-1}$ term comes from Bessel's correction. Observe that 
\begin{equation}
\begin{aligned}
\E\left[\frac{m}{m-1} \, \hat p_x(1 - \hat p_x)\right] &= \frac{m}{m-1} (\E[\hat p_x]-\E[\hat p_x^2]) \\
&= \frac{m}{m-1} (\E[\hat p_x]-\mathrm{Var}(\hat p_x) - \E[\hat p_x]^2) \\
&= \frac{m}{m-1}\left(p_x-\frac{p_x(1-p_x)}{m}-p_x^2\right)  \\
&= p_x(1-p_x)
\end{aligned}
\end{equation}
Therefore, $\E[\hat \predmult(x)] = 4p_x(1-p_x) = \predmult(x)$.
\end{proof}

\begin{proof}[Proof of Proposition \ref{prop:single}]
As $\hat \predmult(x)$ is a continuous transformation of $\hat p_x$, we could bound  the deviation $|\hat \predmult(x)-\predmult(x)|$ by $|\hat p_x-p_x|$. 
Suppose $\hat p_x = p_x + \nu$ and $\nu \in [-\eta,\eta]$, we have
\begin{equation}
\begin{aligned}\label{eq:variance-estimation-1}
    &\left|\frac{m}{m-1} \, \hat p_x(1 - \hat p_x) - p_x(1-p_x)\right| =\\
    &=\left|\frac{m}{m-1} \, (p_x + \nu)(1 - p_x - \nu) - p_x(1-p_x)\right|\\
    &= \left|\left(\frac{m}{m-1}-1\right)p_x(1-p_x) + \frac{m}{m-1} \nu(1 - 2p_x - \nu)\right|\\
    &\leq \frac{p_x(1-p_x)}{m-1} + \frac{m}{m-1}|\nu||1-2p_x+\nu|\\
    &\leq \frac{p_x(1-p_x)}{m-1} + \frac{m}{m-1}|\nu|(1+|\nu|)\\
    &\leq \frac{p_x(1-p_x)}{m-1} + \frac{m}{m-1}\eta(1+\eta)\\
    &\leq \frac{1}{4(m-1)} + \frac{m}{m-1}\eta(1+\eta).
\end{aligned}
\end{equation}
By Chernoff-Hoeffding inequality, we have the following concentration bounds on the sample mean $\hat p_x$,
\begin{equation}\label{eq:variance-estimation-2}
    \Pr[ |\hat p_x - p_x| \geq \nu ] \leq 2 \exp\left(-2 \nu^2 m\right).
\end{equation}
Thus with probability at least $1 - \rho$, we have:
\[|
    \hat p_x - p_x| \leq \sqrt{\log (2/\rho)/2m}.
\]
Combining \cref{eq:variance-estimation-1} and \cref{eq:variance-estimation-2}, we have
\begin{equation}\label{eq:variance-estimation-3}
\begin{aligned}
    |\hat \predmult(x) - \predmult(x)| &= \left| \frac{4m}{m-1} \, \hat p_x(1 - \hat p_x) - 4p_x(1-p_x) \right| \\
    &\leq \frac{1}{(m-1)} + \frac{4m}{m-1}\eta(1+\eta).
\end{aligned}
\end{equation}
Plugging $\eta = \sqrt{\log (2/\rho)/2m}$ into Eq.~\eqref{eq:variance-estimation-3} yields the desired result.
Note that by solving $\frac{1}{(m-1)} + \frac{4m}{m-1}\eta(1+\eta)\leq \alpha$ with $\eta = \sqrt{\log (2/\rho)/2m}$ with conditions $\alpha > 0$ and $0 < \rho < 1$, we have:
\begin{equation}
    m \geq 1 + \frac{\alpha + 2t(2+\alpha)+2\sqrt{2}\sqrt{t(1+\alpha)(2t+\alpha)}}{\alpha^2},
\end{equation}
where $t = \log(2/\rho).$
\end{proof}

\begin{proof}[Proof of Proposition \ref{prop:multi}]
Since the samples are i.i.d., we have the following union bound for the concentration of sample mean,
\begin{equation}
\begin{aligned}
    \Pr\left[ \bigcup_{i=1}^k \{ |\hat p_{x_i} - p_{x_i}| \geq \nu \} \right] & \leq \prod_{i=1}^k \Pr[ |\hat p_{x_i} - p_{x_i}| \geq \nu] \\
    & \leq 2k \exp\left(-2 \nu^2 m\right).
\end{aligned}
\end{equation}
Therefore, with probability $1-\rho$, $|\hat p_{x_i} - p_{x_i}| \leq \sqrt{\log (2k/\rho)/2m}$ for $i = 1,\ldots,k$, and the desired result follows the derivation in Proposition~\ref{prop:single}.

\end{proof}

\section{Additional Experimental Details}
\label{sec:exp-details}

\subsection{Details on the Experimental Setup}

\paragraph{Datasets} For illustrative purposes, we use the following classes as our target labels. For the Credit dataset, we use ``Approved'' as the target label. For the Contraception dataset, we use ``long-term method''. For the dermatology dataset, we use ``seboreic dermatitis'' diagnosis. For the Mammography dataset, we use ``malignant''.

\begin{table}[]
\centering
\caption{Summary of datasets used in our experimental evaluations.}
\label{tab:datasets}
\resizebox{.9\linewidth}{!}{
\begin{tabular}{l|rc|rr}
\toprule
Dataset       & Size  & Number of features      & Train size & Test size   \\
\midrule
Synthetic     & $\infty$ & 2                       & 2000      & 20,000   \\  
Credit        & 653    & 46                      & 489         & 164       \\
Contraception & 1,473  & 9                       & 1,104       & 369       \\
Mammography   & 830    & 5                       & 622         & 208       \\
Dermatology   & 358    & 34                      & 268         & 90        \\
CIFAR-10      & 60,000 & $32 \times 32 \times 3$ & 50,000      & 10,000    \\
\bottomrule
\end{tabular}
}
\end{table} 

\paragraph{CIFAR-10} We use the convolutional neural network trained over the ScatterNet features~\cite{oyallon2015deep} following \citet[Table 9, Appendix]{tramer2021differentially}. We use DP-SGD with batch size of 2048, learning rate of 4, Nesterov momentum of 0.9, and gradient clipping norm of 0.1. We vary the gradient noise multiplier $\sigma$ to achieve the privacy levels of $\varepsilon \approx 2.22, 2.73, 3.62. 4.39, 5.59$ as computed by the Moments accountant~\cite{abadi2016deep}.

\paragraph{Software} We use the following software:
\begin{itemize}
    \item diffprivlib~\cite{diffprivlib} for the implementation of objective-perturbation for logistic regression.
    \item PyTorch~\cite{pytorch} for implementing neural networks.
    \item opacus~\cite{opacus} for training PyTorch neural networks with DP-SGD.
    \item numpy~\cite{numpy}, scipy~\cite{scipy}, and pandas~\cite{pandas} for numeric analyses.
    \item seaborn~\cite{seaborn} for visualizations.
\end{itemize}

\subsection{Multiplicity of Predictions vs. Scores}
\label{sec:vp}
Recall that the models we consider are not only capable of outputting a binary prediction but also a confidence score. The disagreement metric in \cref{def:disagreement}, however, only uses the predictions after applying a threshold. To verify if the trends we observe persist also at the level of confidence scores, we additionally evaluate \newterm{viable prediction range}, a metric for measuring multiplicity of the confidence scores proposed by \citet{watson2022predictive}:
\begin{equation}
    \predmult_\mathsf{vp}(x) \define \max_{\theta \sim \rashomon} h_\theta(x) - \min_{\theta \sim \rashomon} h_\theta(x)
\end{equation}

\cref{fig:syn-vp} shows the viable prediction range for different values in the input space for logistic regression trained with objective perturbation on our synthetic dataset. The regions with high viable prediction range overlap with the regions with high disagreement (see \cref{fig:syn-demo}). This is also consistent with the results on the tabular datasets, for which \cref{fig:tabular-vp} shows both disagreement and viable prediction range increasing on average as the level of privacy increases. 

\vspace{1mm}\noindent\textbf{Implications.} Models trained with a high level of privacy exhibit high multiplicity both of their confidence scores (in terms of viable prediction range) and of ``hard'' predictions after applying a threshold (in terms of disagreement).

\begin{figure*}
    \includegraphics[width=0.9\linewidth]{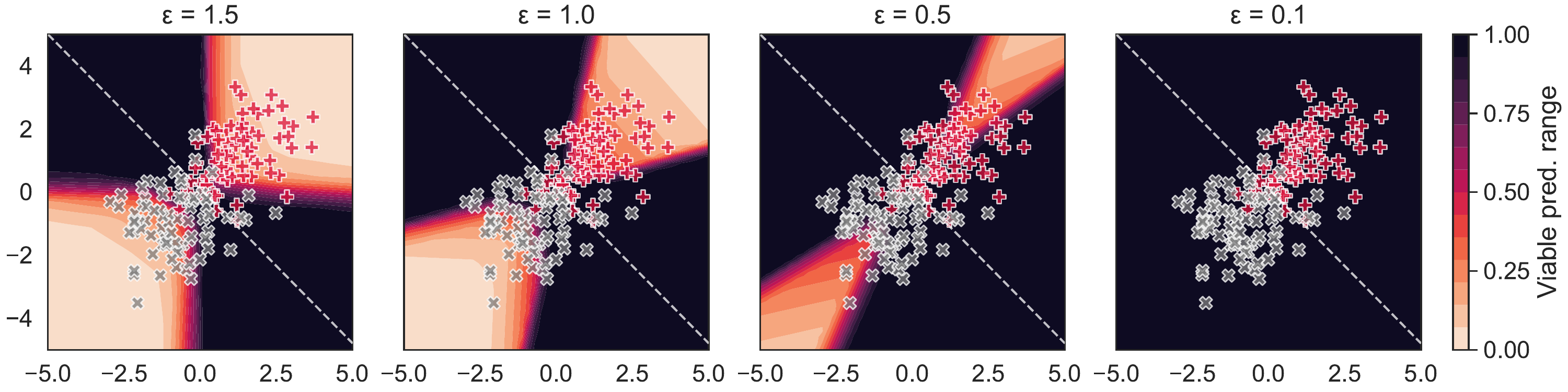}
    \caption{ Viable prediction range of logistic regression trained with objective perturbation is high for examples for which disagreement is also high. See \cref{fig:syn-demo} for the disagreement values and details of the plot setup.}
    \label{fig:syn-vp}
\end{figure*}

\begin{figure*}
    \includegraphics[width=0.7\linewidth]{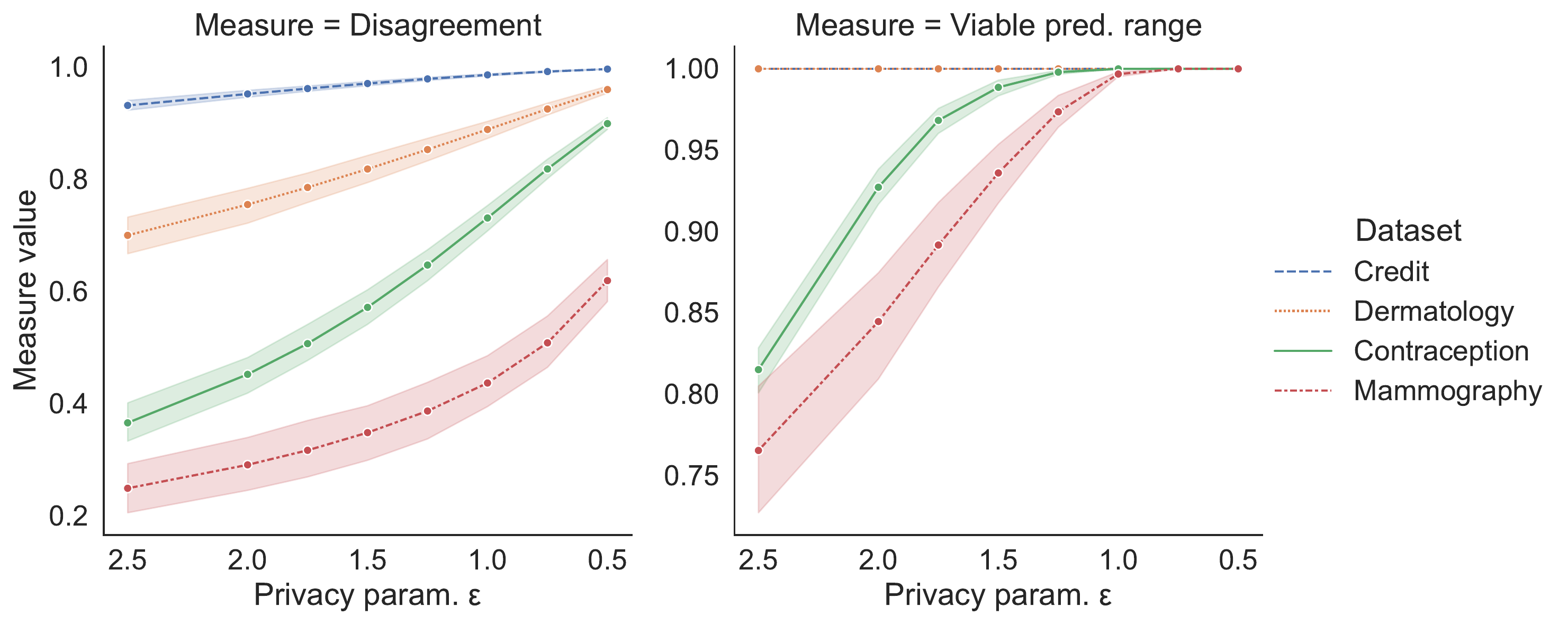}
    \caption{ Both the disagreement and viable prediction range of logistic regression trained with objective perturbation on tabular datasets increases as the level of privacy increases. See \cref{fig:mult-main} for the details of the plot setup.}
    \label{fig:tabular-vp}
\end{figure*}

\subsection{Additional Figures and Tables}\label{app:extras}
The rest of the document contains additional figures and tables.

\begin{figure*}
    \centering
    \begin{subfigure}[t]{.45\linewidth}
    \includegraphics[width=\linewidth]{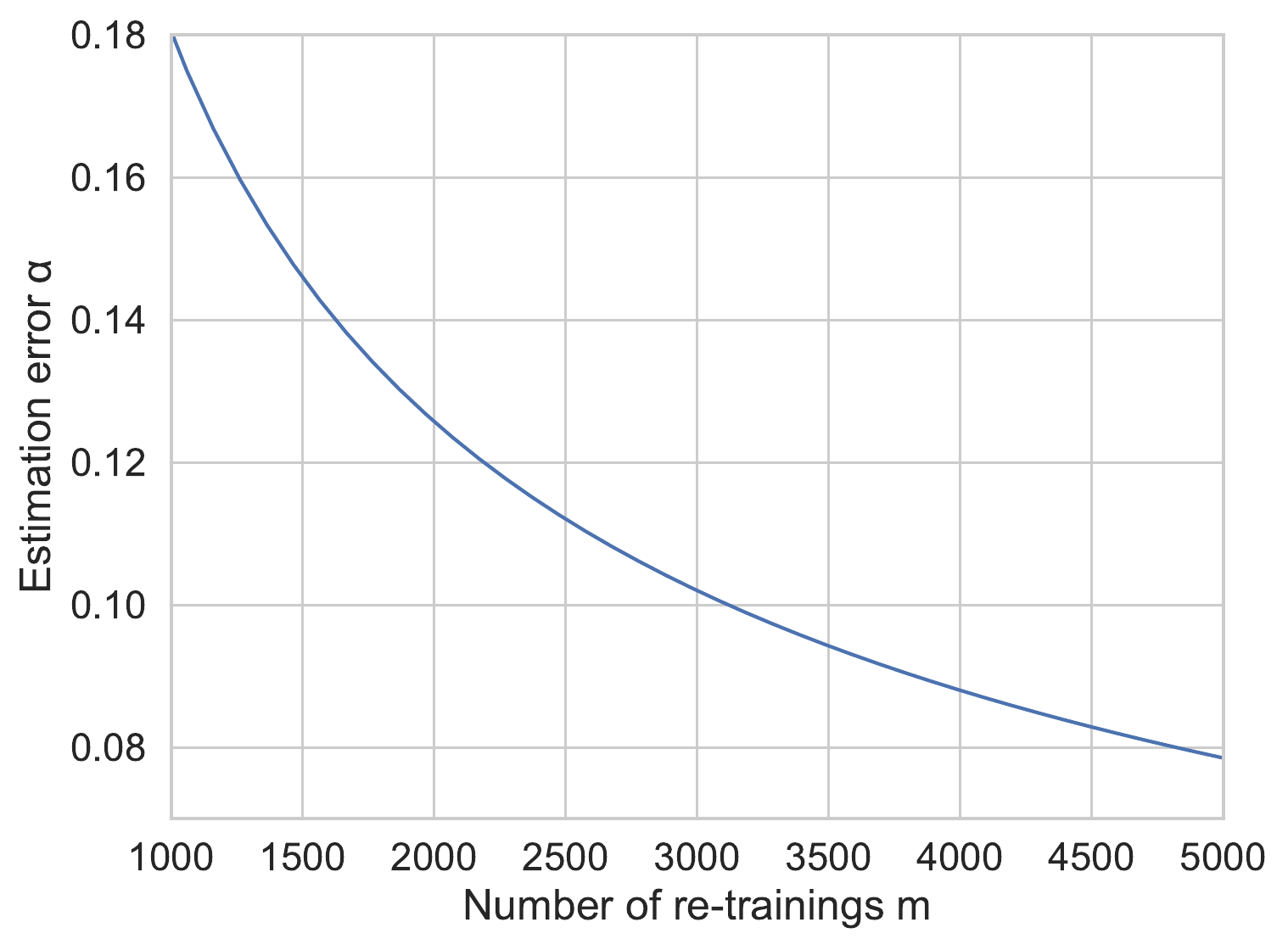}
    \caption{Theoretical error of estimating disagreement w.p. 95\%}
    \label{fig:estimator-rate}
    \end{subfigure}
    ~
    \begin{subfigure}[t]{.45\linewidth}
    \includegraphics[width=\linewidth]{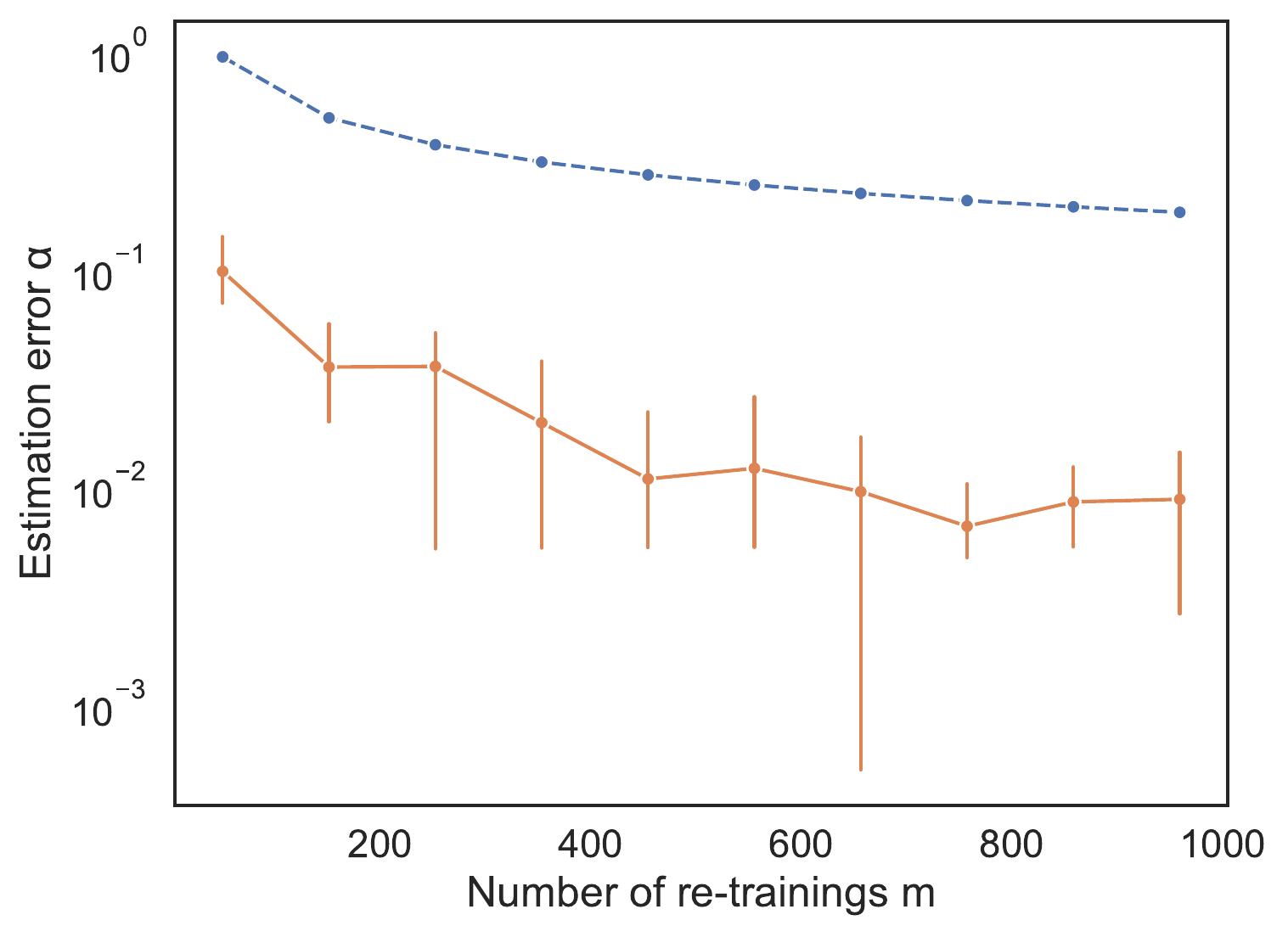}
    \caption{Empirical error of estimating disagreement for one arbitrarily chosen example (solid orange line \textcolor{orange}{---}) compared to the theoretical maximum error w.p. 95\% (dashed blue line \textcolor{blue}{$-\,-$}). The error bars are 95\% confidence intervals over 10 re-samplings of $m$ models. This suggests that the theoretical upper bound on error is pessimistic in practice. y axis is logarithmic.}    
    \label{fig:estimator-rate-vs-real-error}
    \end{subfigure}

    \caption{Visualization of disagreement estimation error as a function of the number of models sampled from the training distribution $\rashomon$.}
\end{figure*}

\begin{table*}
\caption{ Summary statistics of the performance and predictive-multiplicity measures on real-world datasets. For tabular datasets, the performance metrics are the area under the ROC curve (AUC), and the harmonic mean of precision and recall ($F_1$ score) on the test data. For CIFAR-10, the performance metric is the accuracy on the test data. For these, we report mean and standard deviation over the $m$ re-trained models. For disagreement, we report mean, standard deviation, minimum, median, maximum, the 90-th percentile, and the 95-th percentile over the examples in each respective test dataset. Observe that for every dataset there exist multiple examples with high level of predictive multiplicity even if the average level of predictive multiplicity for the given dataset is low. E.g., compare the 95-th percentile of disagreement on the CIFAR-10 dataset at $\varepsilon = 2.22$ (0.81) to its mean value (0.11).}
\label{tab:exp-stats}

\centering
\begin{tabular}{ll|rr|rr|rrrrrrr}
\toprule
      &  & \multicolumn{2}{c|}{AUC} & \multicolumn{2}{c|}{$F_1$ score} & \multicolumn{7}{c}{Disagreement} \\
      Dataset  & $\varepsilon$ &   Mean &    Std. &     Mean &    Std. &     Mean &      Std. &      Min &   Median &      Max &   90 pctl. &   95 pctl. \\
\midrule
Contraception & 0.50 &  57.51 &   6.72 &  48.72 &   7.86 &     0.90 &  0.10 &  0.48 &   0.93 &  1.00 &   0.99 &   1.00 \\
      & 0.75 &  60.26 &   6.20 &  50.29 &   7.54 &     0.82 &  0.17 &  0.24 &   0.88 &  1.00 &   0.99 &   1.00 \\
      & 1.00 &  62.50 &   5.47 &  51.56 &   7.09 &     0.73 &  0.23 &  0.11 &   0.79 &  1.00 &   0.98 &   1.00 \\
      & 1.25 &  64.27 &   4.71 &  52.62 &   6.62 &     0.65 &  0.27 &  0.05 &   0.70 &  1.00 &   0.97 &   0.99 \\
      & 1.50 &  65.62 &   4.00 &  53.53 &   6.14 &     0.57 &  0.30 &  0.02 &   0.60 &  1.00 &   0.96 &   0.99 \\
      & 1.75 &  66.65 &   3.38 &  54.31 &   5.67 &     0.51 &  0.32 &  0.00 &   0.50 &  1.00 &   0.95 &   0.99 \\
      & 2.00 &  67.43 &   2.86 &  54.98 &   5.21 &     0.45 &  0.33 &  0.00 &   0.42 &  1.00 &   0.94 &   0.98 \\
      & 2.50 &  68.49 &   2.10 &  55.97 &   4.39 &     0.37 &  0.33 &  0.00 &   0.27 &  1.00 &   0.92 &   0.97 \\
\midrule
Credit & 0.50 &  52.22 &  15.95 &  46.48 &  16.38 &     1.00 &  0.00 &  0.99 &   1.00 &  1.00 &   1.00 &   1.00 \\
      & 0.75 &  53.72 &  15.70 &  47.84 &  15.70 &     0.99 &  0.01 &  0.98 &   0.99 &  1.00 &   1.00 &   1.00 \\
      & 1.00 &  55.16 &  15.41 &  49.15 &  15.05 &     0.99 &  0.01 &  0.96 &   0.99 &  1.00 &   1.00 &   1.00 \\
      & 1.25 &  56.56 &  15.06 &  50.39 &  14.46 &     0.98 &  0.02 &  0.94 &   0.98 &  1.00 &   1.00 &   1.00 \\
      & 1.50 &  57.86 &  14.69 &  51.59 &  13.89 &     0.97 &  0.03 &  0.91 &   0.98 &  1.00 &   1.00 &   1.00 \\
      & 1.75 &  59.10 &  14.31 &  52.72 &  13.35 &     0.96 &  0.03 &  0.89 &   0.97 &  1.00 &   1.00 &   1.00 \\
      & 2.00 &  60.26 &  13.91 &  53.77 &  12.85 &     0.95 &  0.04 &  0.86 &   0.96 &  1.00 &   1.00 &   1.00 \\
      & 2.50 &  62.41 &  13.12 &  55.70 &  12.05 &     0.93 &  0.06 &  0.80 &   0.95 &  1.00 &   1.00 &   1.00 \\
\midrule
Dermatology & 0.50 &  62.19 &  19.76 &  48.81 &  17.88 &     0.96 &  0.03 &  0.89 &   0.96 &  1.00 &   1.00 &   1.00 \\
      & 0.75 &  66.75 &  17.65 &  52.67 &  16.44 &     0.93 &  0.05 &  0.79 &   0.93 &  1.00 &   0.99 &   0.99 \\
      & 1.00 &  70.44 &  15.83 &  55.88 &  15.21 &     0.89 &  0.08 &  0.69 &   0.90 &  1.00 &   0.98 &   0.99 \\
      & 1.25 &  73.46 &  14.28 &  58.57 &  14.20 &     0.85 &  0.10 &  0.60 &   0.86 &  1.00 &   0.98 &   0.98 \\
      & 1.50 &  75.94 &  12.97 &  60.93 &  13.30 &     0.82 &  0.12 &  0.52 &   0.83 &  1.00 &   0.97 &   0.98 \\
      & 1.75 &  78.04 &  11.89 &  62.98 &  12.60 &     0.79 &  0.13 &  0.46 &   0.80 &  1.00 &   0.95 &   0.97 \\
      & 2.00 &  79.80 &  10.96 &  64.78 &  12.00 &     0.75 &  0.15 &  0.39 &   0.77 &  0.99 &   0.94 &   0.96 \\
      & 2.50 &  82.66 &   9.45 &  67.80 &  10.95 &     0.70 &  0.17 &  0.32 &   0.72 &  0.99 &   0.92 &   0.94 \\
\midrule
Mammography & 0.50 &  75.64 &   8.95 &  69.22 &   9.88 &     0.62 &  0.28 &  0.20 &   0.61 &  1.00 &   0.98 &   1.00 \\
      & 0.75 &  78.57 &   6.51 &  72.46 &   7.04 &     0.51 &  0.34 &  0.07 &   0.45 &  1.00 &   0.98 &   1.00 \\
      & 1.00 &  80.36 &   5.26 &  74.39 &   5.48 &     0.44 &  0.36 &  0.02 &   0.33 &  1.00 &   0.97 &   0.99 \\
      & 1.25 &  81.62 &   4.44 &  75.64 &   4.66 &     0.39 &  0.37 &  0.01 &   0.24 &  1.00 &   0.95 &   0.99 \\
      & 1.50 &  82.54 &   3.82 &  76.56 &   4.14 &     0.35 &  0.37 &  0.00 &   0.17 &  1.00 &   0.93 &   0.99 \\
      & 1.75 &  83.25 &   3.36 &  77.29 &   3.81 &     0.32 &  0.36 &  0.00 &   0.12 &  1.00 &   0.91 &   0.98 \\
      & 2.00 &  83.81 &   2.98 &  77.85 &   3.56 &     0.29 &  0.35 &  0.00 &   0.08 &  1.00 &   0.89 &   0.98 \\
      & 2.50 &  84.61 &   2.40 &  78.70 &   3.22 &     0.25 &  0.34 &  0.00 &   0.04 &  1.00 &   0.84 &   0.96 \\
\bottomrule
\end{tabular}
\\[2em]
\begin{tabular}{ll|rr|rrrrrrr}
\toprule
        & & \multicolumn{2}{c|}{Accuracy} & \multicolumn{7}{c}{Avg. Disagreement across Classes} \\
        Dataset  & $\varepsilon$ &   Mean &    Std. &     Mean &      Std. &      Min &   Median &      Max &   90 pctl. &   95 pctl. \\
\midrule
CIFAR-10 & 2.22 &    65.38 &     0.32 &          0.11 &          0.25 &           0.0 &           0.0 &           1.0 &          0.48 &          0.81 \\
        & 2.73 &    67.65 &     0.35 &          0.09 &          0.23 &           0.0 &           0.0 &           1.0 &          0.36 &          0.77 \\
        & 3.62 &    69.56 &     0.32 &          0.08 &          0.22 &           0.0 &           0.0 &           1.0 &          0.29 &          0.69 \\
        & 4.39 &    70.38 &     0.33 &          0.07 &          0.21 &           0.0 &           0.0 &           1.0 &          0.23 &          0.64 \\
        & 5.59 &    71.06 &     0.29 &          0.06 &          0.20 &           0.0 &           0.0 &           1.0 &          0.15 &          0.59 \\
\bottomrule
\end{tabular}

\end{table*}

\end{document}